%% file: 00main.tex
\documentclass[]{fairmeta_gr2}

\usepackage{amsmath,amssymb,amsthm}
\usepackage{algorithm,algpseudocode}

\newtheorem{proposition}{Proposition}[section]

\titleformat{\paragraph}[runin]
  {\normalfont\normalsize\sffamily\bfseries}{}{0pt}{$\blacktriangleright$\enspace}
\titlespacing*{\paragraph}{0pt}{3.25ex plus 1ex minus .2ex}{0.5em}

\newif\ifshowcomments
\showcommentstrue

\ifshowcomments
  \newcommand{\zhe}[1]{{\small\textcolor{orange}{\bfseries zhe: #1}}}
  \newcommand{\todo}[1]{{\textcolor{red}{\bfseries TODO: #1}}}
\else
  \newcommand{\zhe}[1]{}
  \newcommand{\todo}[1]{}
\fi

\title{Exploring the Design Space of Permutation Diffusion LLMs}

\author[1]{Zhe Xu}
\author[1]{Mingfu Liang}
\author[1,*]{Kavosh Asadi}
\author[1,*]{Jean-Pascal Billaud}
\author[1,*]{Tianyi Chen}
\author[1,*]{Yanhui Guo}
\author[1,*]{Christina Huang}
\author[1,*]{Baekjin Kim}
\author[1,*]{Jimmy Kim}
\author[1,*]{Yufei Li}
\author[1,*]{Daqi Sun}
\author[1,*]{Jiachen Sun}
\author[1,*]{Dian Yu}
\author[1,*]{Zaiwei Zhang}
\author[1]{Kaushik Rangadurai}
\author[1]{Frank Shyu}
\author[1]{Sandeep Pandey}
\author[1]{Xi Liu}
\author[1]{Luke Simon}

\affiliation[1]{Meta}

\contribution[*]{Listed in alphabetical order by surname.}

\abstract{\input{files/01_abstract}}

\date{\today}
\correspondence{\email{zhexu@meta.com}, \email{xliu1@meta.com}}

\begin{document}

\maketitle

\input{files/02_introduction}
\input{files/03_related_work}
\input{files/04_preliminaries}
\input{files/05_method}
\input{files/06_experiments}
\input{files/07_conclusion}

\bibliographystyle{assets/plainnat}
\bibliography{ref}

\clearpage

\appendix
\input{files/08_appendix}

\section{Analysis}
\input{files/09_hard_sampling_analysis.tex}
\input{files/09_soft_sampling_analysis.tex}

\end{document}

%% file: files/01_abstract.tex
Permutation generation with language models requires both text-conditioned ranking and a global one-to-one assignment between candidates and output positions. We study this problem through three diffusion-LLM parameterizations: hard-assignment prediction over shuffled candidate-index sequences, soft-assignment prediction over doubly stochastic matrices, and direct ordering-score prediction followed by sorting. The three families place permutation constraints at different stages of prediction and decoding and therefore support different training objectives and sampling procedures. We evaluate these designs on recommendation reranking across three Amazon domains with candidate sets of size 10 and 20. At one function evaluation, the hard- and soft-assignment variants achieve Recall@1 comparable to a constrained masked-diffusion baseline, whereas direct ordering-score prediction guarantees valid permutations without Hungarian assignment or Sinkhorn normalization. Additional function evaluations do not consistently improve ranking quality, and their effect depends strongly on the state representation and training objective. These results show that corruption, prediction parameterization, objective, and decoder should be designed jointly for permutation diffusion with language-model denoisers.

%% file: files/02_introduction.tex

\section{Introduction}
\label{sec:introduction}

Permutation generation maps a fixed candidate set to an ordered sequence in which each candidate appears exactly once. This structure arises in ranking, sorting, and other combinatorial prediction tasks~\citep{vinyals2015pointer,zhang2025symmetricdiffusers}. Large language models (LLMs) can condition such orderings on textual candidate descriptions and task context, as demonstrated in listwise reranking~\citep{sun2023rankgpt}. Unlike unconstrained text generation, permutation generation requires a global one-to-one assignment between candidates and output positions.

Autoregressive (AR) LLMs generate candidate-index tokens sequentially and therefore require $K$ prediction steps for a set of size $K$. Diffusion LLMs (dLLMs) instead predict multiple positions in parallel and iteratively update the resulting state through successive function evaluations (NFEs)~\citep{sahoo2024mdlm,nie2025llada}. Recent work applies this paradigm to document and recommendation reranking~\citep{liu2026diffurank,zhang2026diffusiongr2}. Parallel prediction alone, however, does not guarantee a valid permutation because independently predicted identifiers may repeat or omit candidates. The central modeling challenge is thus to combine parallel prediction with the global assignment constraint.

Permutation-structured corruption provides a natural starting point. Hard shuffling changes candidate order while preserving permutation validity. A continuous alternative represents assignments as doubly stochastic matrices in the Birkhoff polytope, the convex hull of permutation matrices. SymmetricDiffusers, Soft-Rank Diffusion, and PermFlow study discrete or continuous permutation generation with task-specific neural architectures rather than pretrained general-purpose LLMs~\citep{zhang2025symmetricdiffusers,he2026softrank,min2026permflow}. We investigate how such structured states and prediction parameterizations can be integrated with a pretrained LLM for text-conditioned generation. Structural preservation during corruption is insufficient by itself: the prediction parameterization, training objective, and decoder jointly determine whether the model learns and returns valid assignments.

We study three parameterizations within a shared conditional clean-state prediction framework (Section~\ref{sec:method} and Table~\ref{tab:method-design-space}). \textbf{Hard-assignment diffusion LLMs (HA-dLLMs)} operate on shuffled candidate-index sequences and predict candidate scores for each output position; Hungarian assignment produces a permutation at every denoising step. \textbf{Soft-assignment diffusion LLMs (SA-dLLMs)} use continuous assignment matrices as both states and predictions; Sinkhorn normalization approximately enforces doubly stochastic constraints, and a terminal Hungarian assignment returns a discrete permutation. \textbf{Ordering-score diffusion LLMs (OS-dLLMs)} operate on shuffled sequences but predict one scalar score per candidate; sorting yields a valid permutation without Hungarian assignment or Sinkhorn normalization. The three families therefore impose the assignment constraint at different stages of prediction and decoding.

For each parameterization, we examine the corruption process, training objective, and sampling procedure. For HA-dLLMs, we compare token cross-entropy, row-plus-column assignment cross-entropy, Sinkhorn marginal negative log-likelihood, and a structured hinge loss. For SA-dLLMs, anchored-rollout training supervises predictions from model-generated states, and we compare Predict--recorrupt and Residual-correction sampling adapted from Cold Diffusion~\citep{bansal2023cold}. OS-dLLM combines a listwise ordering objective with sorting-based decoding.

We evaluate these configurations on recommendation reranking across three Amazon domains with candidate sets of 10 and 20 items. The task reorders a supplied candidate set rather than generating a single item. At one function evaluation (NFE$=1$), the hard- and soft-assignment variants achieve reranking quality comparable to a constrained masked diffusion language model (mdLLM). Direct ordering-score prediction uniquely guarantees permutation validity without Hungarian assignment or Sinkhorn normalization. The experiments characterize how the design choices affect ranking quality, validity, decoding structure, and sensitivity to NFE.

Our contributions are threefold:
\begin{itemize}
    \item We formulate three permutation-diffusion LLM families---hard assignment, soft assignment, and direct ordering score---and distinguish their state representations and validity mechanisms.
    \item We develop family-specific corruption processes\footnote{We use \emph{corruption}, \emph{noising}, and \emph{forward process} interchangeably; likewise, \emph{corruption level} and \emph{diffusion time} are synonymous. \emph{Denoising} and \emph{sampling} refer to reverse generation.}, objectives, and samplers, including structured assignment losses and model-generated soft-state training.
    \item We empirically characterize the resulting tradeoffs in ranking quality, permutation validity, and iterative behavior on recommendation reranking.
\end{itemize}

%% file: files/03_related_work.tex

\section{Related Work}
\label{sec:related-work}

\paragraph{Diffusion models for discrete structures.} 
Discrete denoising diffusion extends generative modeling beyond continuous Gaussian corruption by specifying transition processes over categorical states~\citep{austin2021d3pm}. Masked diffusion specializes this approach to absorbing mask states, with objectives closely connected to masked language modeling~\citep{sahoo2024mdlm}. Subsequent work scales this formulation to instruction-following language models~\citep{nie2025llada}. Beyond sequences, DISCO formulates discrete-state continuous-time diffusion for graph generation and establishes permutation-equivariant and permutation-invariant properties with respect to node ordering~\citep{xu2024disco}.

\paragraph{Cold Diffusion.}
Cold Diffusion studies image generation under general deterministic degradations rather than discrete-sequence diffusion~\citep{bansal2023cold}. Its restoration-and-recorruption perspective motivates our samplers. We adapt this perspective to assignments: hard corruption rearranges a fixed candidate set, whereas soft corruption interpolates between doubly stochastic matrices.

\paragraph{Generative models of permutations.}
SymmetricDiffusers defines discrete diffusion on finite symmetric groups using shuffle-based forward processes and generalized Plackett--Luce reverse parameterizations~\citep{zhang2025symmetricdiffusers}. Soft-Rank Diffusion constructs corruption through continuous soft ranks and uses contextualized generalized Plackett--Luce denoisers to predict clean permutations~\citep{he2026softrank}. PermFlow learns conditional flows on the affine space of matrices with unit row and column sums, targeting multimodal permutation distributions~\citep{min2026permflow}. These works evaluate task-specific permutation architectures rather than pretrained general-purpose LLMs. We incorporate related structural ideas into a pretrained language-model backbone and compare per-step assignment projection, soft-state recurrence, and direct ordering-score prediction. Unlike soft-rank vectors or unconstrained affine states, SA-dLLM evolves nonnegative doubly stochastic matrices.

\paragraph{Assignment relaxations and neural ordering.}
Pointer Networks established neural sequence prediction over input-dependent object sets~\citep{vinyals2015pointer}. Listwise maximum likelihood estimation (ListMLE) defines a likelihood objective for ordering scores~\citep{xia2008listmle}, which we apply to the source-slot order induced by a corrupted permutation. For matrix-valued assignments, Sinkhorn normalization enforces row and column marginals through iterative scaling~\citep{sinkhorn1967matrices}, while Gumbel--Sinkhorn Networks use it to construct differentiable relaxations of latent permutations~\citep{mena2018gumbelsinkhorn}. Hungarian assignment provides the discrete optimization primitive for our projected decoders~\citep{kuhn1955hungarian}. We compare how these components impose assignment constraints during language-model sampling.

\paragraph{Language models for reranking and recommendation.}
RankGPT frames listwise reranking as instructional permutation generation~\citep{sun2023rankgpt}, while RankZephyr develops specialized open listwise rerankers~\citep{pradeep2023rankzephyr}. Faster Improved Listwise Reranking with Single Token Decoding (FIRST) derives a complete ranking from first-token logits, showing that LLM reranking need not generate a full autoregressive identifier sequence~\citep{reddy2024first}. DiffuRank studies masked-diffusion reranking with parallel scoring, constrained permutation sampling, and Hungarian assignment at NFE$=1$~\citep{liu2026diffurank}. Our constrained mdLLM baseline is closely related to this assignment formulation; HA-dLLM and OS-dLLM replace masking with shuffling, whereas SA-dLLM evolves an assignment matrix throughout inference. Recent work also advocates discriminative LLM retrieval rather than generative retrieval for efficient candidate generation~\citep{xu2026case}; our setting begins after retrieval and ranks a fixed candidate set. In recommendation, GR2 introduces an autoregressive generative reasoning reranker trained through semantic-ID mid-training, reasoning-trace distillation, and reinforcement learning~\citep{liang2026gr2}. Its technical report extends the system with context compression, on-policy distillation, and serving-oriented reasoning distillation~\citep{li2026gr2technical}. Diffusion-GR2 then converts GR2 to block diffusion through conversion fine-tuning, on-policy distillation, and reinforcement learning~\citep{zhang2026diffusiongr2}. We instead model the permutation state directly, enforce validity through sorting or assignment, and generate no explicit reasoning trace. LLaDA-Rec applies masked diffusion to parallel semantic-identifier generation~\citep{shi2025lladarec}; its next-item generation task differs from reordering all members of a supplied candidate set.

\paragraph{Training on model-generated states.}
Conditioning on model-generated states creates distribution shift in sequential prediction. Dataset Aggregation (DAgger) addresses this problem by collecting supervision on learner-induced states~\citep{ross2011dagger}, and Diffusion-GR2 applies on-policy teacher supervision to generated reasoning-and-ranking trajectories~\citep{zhang2026diffusiongr2}. Our anchored-rollout objective extends this motivation to continuous assignment states. We measure stability across NFE budgets rather than assume that additional evaluations improve quality.

%% file: files/04_preliminaries.tex
\section{Preliminaries}
\label{sec:preliminaries}

\subsection{Notation and Problem Formulation}
\label{sec:prelim-notation}

Plain italic symbols denote scalars (e.g., $K,N,t$), bold lowercase symbols denote numerical vectors (e.g., $\mathbf{h}_i$), and bold uppercase symbols denote matrices (e.g., $\mathbf{S},\mathbf{X}_t$). Calligraphic symbols denote sets and spaces (e.g., $\mathcal{V}_{\mathrm{idx}},\mathcal{B}_K$). Permutations, symbolic token sequences, and the parameter collection $\theta$ remain unbolded.

For a matrix $\mathbf{A}$, we write $A_{ij}$ for scalar entries and $\mathbf{A}_{i,:}$ for rows; for time-indexed matrices, $X_{t,ij}=(\mathbf{X}_t)_{ij}$. Scalar entries remain unbolded, including those of predicted matrices such as $Q^{(n)}_{ij}$. We apply $\log \mathbf{A}$ and $\max\{\mathbf{A},\varepsilon\}$ elementwise. All $\operatorname*{arg\,max}$ expressions select one maximizer using a fixed tie-breaking rule. Permutations compose as $(a\circ b)(i)=a(b(i))$.

Let $K\geq2$ be the number of candidates and $[K]=\{1,\ldots,K\}$. The context $c$ contains the user context and the presented candidate descriptions $x_{1:K}$. Let $\mathcal{V}_{\mathrm{raw}}$ denote the pretrained vocabulary and $\mathcal{V}_{\mathrm{idx}}=\{\nu_1,\ldots,\nu_K\}$ the added candidate-index tokens, with $\nu_j$ serialized as \texttt{<idx\_j>}. The expanded vocabulary $\mathcal{V}$ contains these sets and additional task-specific tokens. Index $j$ identifies a candidate in the presented list rather than an output position: token \texttt{<idx\_j>} at output position $i$ places candidate $j$ at rank $i$. We write $\theta$ for the model parameters and $\mathfrak{S}_K$ for the symmetric group on $[K]$. A ranking is a permutation $\pi\in\mathfrak{S}_K$, where $\pi(i)=j$ means that candidate $j$ occupies output position $i$. The clean target permutation $\pi_0$ corresponds to the token sequence
\begin{equation}
  y_0
  =\left(\nu_{\pi_0(1)},\ldots,\nu_{\pi_0(K)}\right).
  \label{eq:prelim-clean-token-sequence}
\end{equation}
Candidate indices and output positions are one-based. The clean target has permutation matrix
\begin{equation}
  P_{0,ij}=\mathbf{1}\!\left[j=\pi_0(i)\right].
  \label{eq:prelim-clean-permutation}
\end{equation}

\subsection{Autoregressive Reranking}
\label{sec:prelim-ar}

An autoregressive reranker predicts the candidate-index sequence $y_0$ from left to right~\citep{vinyals2015pointer,sun2023rankgpt}. Let $p_\theta$ be the vocabulary distribution. For candidate-token prediction, we use its restriction to $\mathcal{V}_{\mathrm{idx}}$,
\begin{equation}
  p_\theta^{\mathrm{idx}}(v\mid c,y_{<i})
  =\frac{p_\theta(v\mid c,y_{<i})}
    {\sum_{v'\in\mathcal{V}_{\mathrm{idx}}}p_\theta(v'\mid c,y_{<i})},
  \qquad v\in\mathcal{V}_{\mathrm{idx}},
  \label{eq:prelim-index-distribution}
\end{equation}
where $y_{<i}$ denotes the preceding candidate tokens. The candidate-sequence distribution factorizes as
\begin{equation}
  p_{\theta}^{\mathrm{AR}}(y_0\mid c)
  =\prod_{i=1}^{K}
    p_\theta^{\mathrm{idx}}\!\left(
      y_{0,i}
      \mid c,y_{0,<i}
    \right).
  \label{eq:prelim-ar-factorization}
\end{equation}
The candidate-token loss under teacher-forced causal fine-tuning is
\begin{equation}
  \mathcal{L}_{\mathrm{AR}}
  =-\frac{1}{K}
    \sum_{i=1}^{K}
    \log p_\theta^{\mathrm{idx}}\!\left(
      y_{0,i}
      \mid c,y_{0,<i}
    \right).
  \label{eq:prelim-ar-loss}
\end{equation}

At inference, the evaluated AR baseline uses permutation-constrained greedy decoding:
\begin{equation}
  \widehat y_i
  =\operatorname*{arg\,max}_{v\in\mathcal{V}_{\mathrm{idx}}\setminus\{\widehat y_1,\ldots,\widehat y_{i-1}\}}
    p_\theta^{\mathrm{idx}}\!\left(v\mid c,\widehat y_{<i}\right).
  \label{eq:prelim-ar-greedy}
\end{equation}
Restricting each prediction to unused candidate-index tokens ensures that every candidate appears exactly once; unconstrained greedy decoding may repeat or omit candidates. Generating the full ranking requires $K$ sequential prediction steps. Section~\ref{sec:experiments} specifies the output format.

\subsection{Masked Diffusion Language-Model Reranking}
\label{sec:prelim-mdllm}

The masked diffusion language model (mdLLM) baseline adapts masked clean-token prediction~\citep{sahoo2024mdlm,nie2025llada} to candidate-index sequences.

\paragraph{Candidate scores.}
For parallel prediction, let $L_{iv}$ be the logit for vocabulary token $v\in\mathcal{V}$ at output position $i$. The candidate-score matrix is
\begin{equation}
  S_{ij}=L_{i\nu_j},
  \qquad \mathbf{S}\in\mathbb{R}^{K\times K},
  \label{eq:prelim-candidate-scores}
\end{equation}
Rows index output positions and columns index candidates. During iterative inference, $\mathbf{S}^{(n)}$ denotes the score matrix at step $n$, with steps indexed as $n=N,N-1,\ldots,1$ for an NFE budget of $N$. Each function evaluation scores all $K$ positions in parallel.

\subsubsection{Forward masking and clean-state prediction}

The forward process replaces a random subset of clean ranking tokens with the mask token $\nu_{\mathrm{mask}}$, serialized as \texttt{<rank\_mask>}. We draw $\rho\sim\operatorname{Unif}(\rho_{\min},1)$, where $\rho_{\min}\in[0,1)$, and independently sample $\beta_i\sim\operatorname{Bernoulli}(\rho)$ for each $i\in[K]$. If no position is masked, we select one uniformly and set its indicator to one. The resulting mask set and corrupted sequence are
\begin{equation}
  \Omega=\{i\in[K]:\beta_i=1\},
  \label{eq:prelim-md-mask-set}
\end{equation}
\begin{equation}
  \widetilde y_i=
  \begin{cases}
    \nu_{\mathrm{mask}}, & i\in\Omega,\\
    y_{0,i}, & i\notin\Omega.
  \end{cases}
  \label{eq:prelim-md-forward}
\end{equation}

The context is encoded as a causal prefix, while the $K$ ranking slots attend to the prefix and bidirectionally to one another. Prediction is unshifted: the hidden state at position $i$ predicts the clean token $y_{0,i}$. Using the candidate-score matrix $\mathbf{S}$ from Equation~\eqref{eq:prelim-candidate-scores}, the single-example cross-entropy over masked positions is
\begin{equation}
  \mathcal{L}_{\mathrm{mdLLM}}
  =-\frac{1}{|\Omega|}
    \sum_{i\in\Omega}
    \log
    \frac{\exp S_{i,\pi_0(i)}}
         {\sum_{j=1}^{K}\exp S_{ij}}.
  \label{eq:prelim-md-loss}
\end{equation}

\subsubsection{Raw confidence decoding}

We adapt confidence-based parallel decoding~\citep{chang2022maskgit,nie2025llada} to fixed-length candidate sequences. Inference begins from $K$ masked positions and uses $1\leq N\leq K$ function evaluations, indexed in decreasing order as $n=N,N-1,\ldots,1$. Each evaluation predicts the candidate-score matrix $\mathbf{S}^{(n)}$ in parallel. Let $\mathcal{U}_n\subseteq[K]$ denote the unresolved positions, initially $\mathcal{U}_N=[K]$. Their confidence scores are
\begin{equation}
  \alpha_i^{(n)}
  =\max_{j\in[K]}
    \frac{\exp S^{(n)}_{ij}}
         {\sum_{j'=1}^{K}\exp S^{(n)}_{ij'}},
  \qquad i\in\mathcal{U}_n.
  \label{eq:prelim-md-confidence}
\end{equation}
Within each row, the largest confidence corresponds to the largest logit.


To fill all positions in exactly $N$ evaluations, step $n$ commits
\begin{equation}
  m_n
  =\left\lceil
    \frac{|\mathcal{U}_n|}{n}
   \right\rceil
  \label{eq:prelim-md-commit-count}
\end{equation}
positions with the largest confidence. Denote this selected set by $\mathcal{I}_n\subseteq\mathcal{U}_n$. The raw decoder predicts
\begin{equation}
  \widehat j_i
  =\operatorname*{arg\,max}_{j\in[K]}S^{(n)}_{ij},
  \qquad i\in\mathcal{I}_n,
  \label{eq:prelim-md-raw-commit}
\end{equation}
and sets $\mathcal{U}_{n-1}=\mathcal{U}_n\setminus\mathcal{I}_n$. Committed tokens remain visible in subsequent evaluations. Because candidate choices are independent across positions, simultaneous commitments may repeat candidates, and later evaluations may reuse candidates committed earlier.

\subsection{Assignment Operators}
\label{sec:prelim-assignment}

\paragraph{Discrete assignment.}
Given a score matrix $\mathbf{S}\in\mathbb{R}^{K\times K}$, an assignment operator enforces a one-to-one correspondence between output positions and candidates. Define the set of permutation matrices as
\begin{equation}
  \mathcal{P}_K
  =\left\{
    \mathbf{P}\in\{0,1\}^{K\times K}:
    \mathbf{P}\mathbf{1}=\mathbf{1},\;
    \mathbf{P}^\top\mathbf{1}=\mathbf{1}
  \right\}
  \label{eq:prelim-permutation-matrices}
\end{equation}
where $\mathbf{1}\in\mathbb{R}^K$ is the all-ones vector. The maximum-score assignment is
\begin{equation}
  \widehat{\mathbf{P}}
  \in\operatorname*{arg\,max}_{\mathbf{P}\in\mathcal{P}_K}
    \langle \mathbf{S},\mathbf{P}\rangle,
  \qquad \langle \mathbf{S},\mathbf{P}\rangle=\sum_{i,j}S_{ij}P_{ij}.
  \label{eq:prelim-assignment}
\end{equation}
The Hungarian algorithm solves this linear assignment problem~\citep{kuhn1955hungarian}. Each row and column of $\widehat{\mathbf{P}}$ contains exactly one nonzero entry, yielding a valid permutation.

\paragraph{Soft assignment.}
Relaxing binary entries to nonnegative values yields the Birkhoff polytope of doubly stochastic matrices,
\begin{equation}
  \mathcal{B}_K
  =\left\{
    \mathbf{X}\in\mathbb{R}_{\geq 0}^{K\times K}:
    \mathbf{X}\mathbf{1}=\mathbf{1},\;
    \mathbf{X}^\top\mathbf{1}=\mathbf{1}
  \right\}.
  \label{eq:prelim-birkhoff}
\end{equation}
For temperature $\tau>0$, Sinkhorn normalization alternately rescales the rows and columns of the elementwise exponential $\exp(\mathbf{S}/\tau)$ to sum to one~\citep{sinkhorn1967matrices,mena2018gumbelsinkhorn}. We denote this operation by $\operatorname{SK}_{\tau}(\mathbf{S})$ and implement it in log space for numerical stability; Appendix~\ref{sec:implementation-details} specifies the stopping criterion and iteration limit. The output is an approximately doubly stochastic soft assignment rather than a discrete permutation.


\paragraph{Constrained mdLLM decoding.}
Assignment-based permutation decoding has also been studied for diffusion reranking~\citep{liu2026diffurank}. The constrained masked decoder applies discrete assignment to each selected subset of positions. Let $\mathcal{J}_n\subseteq[K]$ denote the unused candidates, with $\mathcal{J}_N=[K]$ initially. Confidence is recomputed over $\mathcal{J}_n$, and the $m_n$ most confident positions in $\mathcal{U}_n$ form $\mathcal{I}_n$. Their joint assignment is
\begin{equation}
  a_n^*
  =\operatorname*{arg\,max}_{
      a:\mathcal{I}_n\hookrightarrow\mathcal{J}_n
    }
    \sum_{i\in\mathcal{I}_n}S^{(n)}_{i,a(i)},
  \label{eq:prelim-md-constrained-commit}
\end{equation}
where $\hookrightarrow$ denotes an injective mapping. A rectangular Hungarian solve assigns distinct unused candidates to the selected positions. The selected positions are committed, and their candidates are removed from $\mathcal{J}_n$ before the next evaluation. At $N=1$, the procedure reduces to a global $K\times K$ assignment; at $N=K$, each evaluation commits one position and one unused candidate. It therefore returns a valid permutation for every supported budget, using one shrinking rectangular assignment solve per evaluation.

%% file: files/05_method.tex
\section{Method}
\label{sec:method}

\paragraph{Design-space overview.}
We consider three parameterizations of conditional clean-state prediction: hard-assignment prediction (HA-dLLM), soft-assignment prediction (SA-dLLM), and direct ordering-score prediction (OS-dLLM). Table~\ref{tab:method-design-space} summarizes their state representations, corruption processes, objectives, samplers, and decoders.


Figures~\ref{fig:permutation-diffusion-training} and~\ref{fig:permutation-diffusion-workflow} illustrate the shared training and sampling workflows. Training corrupts the target permutation while holding the instruction prefix fixed and optimizes the model to predict the clean state. At inference, each model evaluation predicts a clean state from the current corrupted state; a multi-step sampler then constructs the input to the next evaluation.

\begin{figure}[t!]
  \centering
  \includegraphics[width=.8\textwidth]{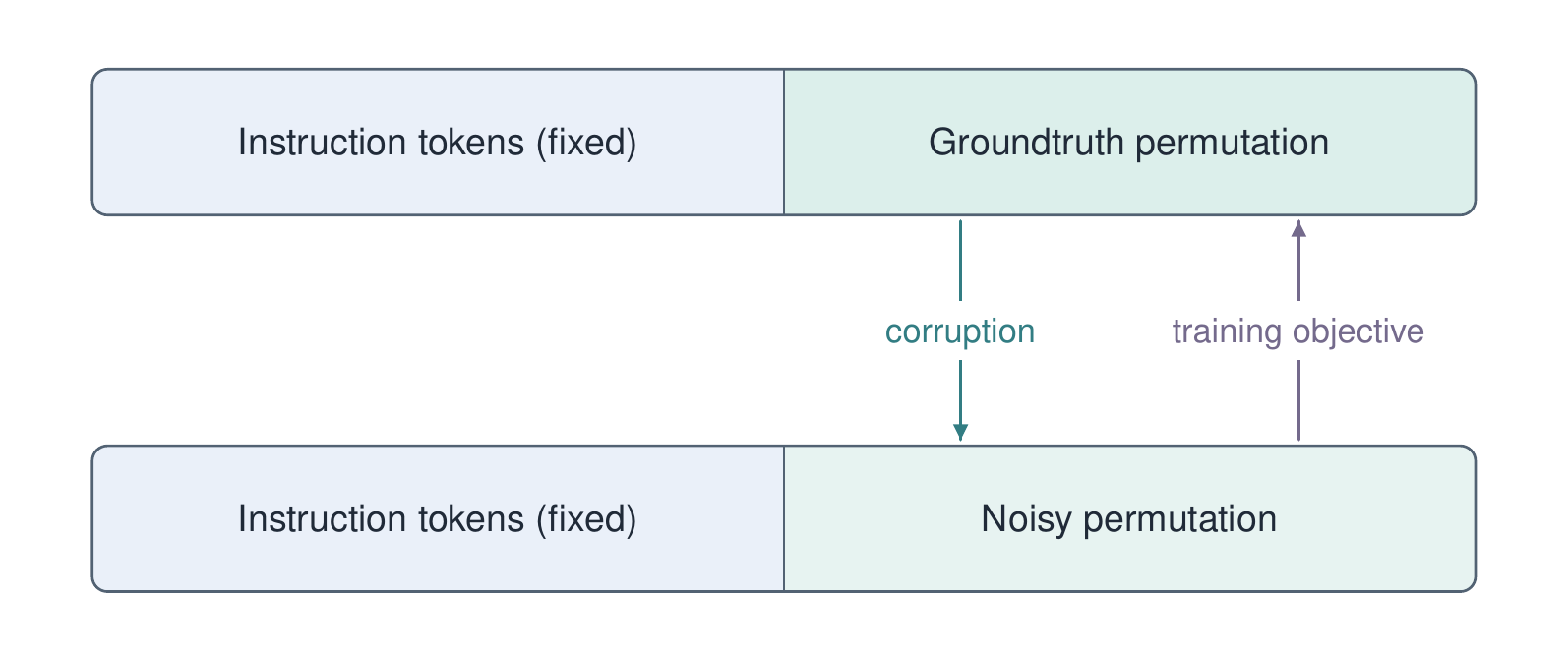}
  \caption{Permutation-diffusion training. The instruction prefix remains fixed while the target permutation is corrupted. The model predicts the clean permutation from the corrupted state and is optimized with a family-specific objective.}
  \label{fig:permutation-diffusion-training}
\end{figure}

\begin{figure}[t!]
  \centering
  \includegraphics[width=.9\textwidth]{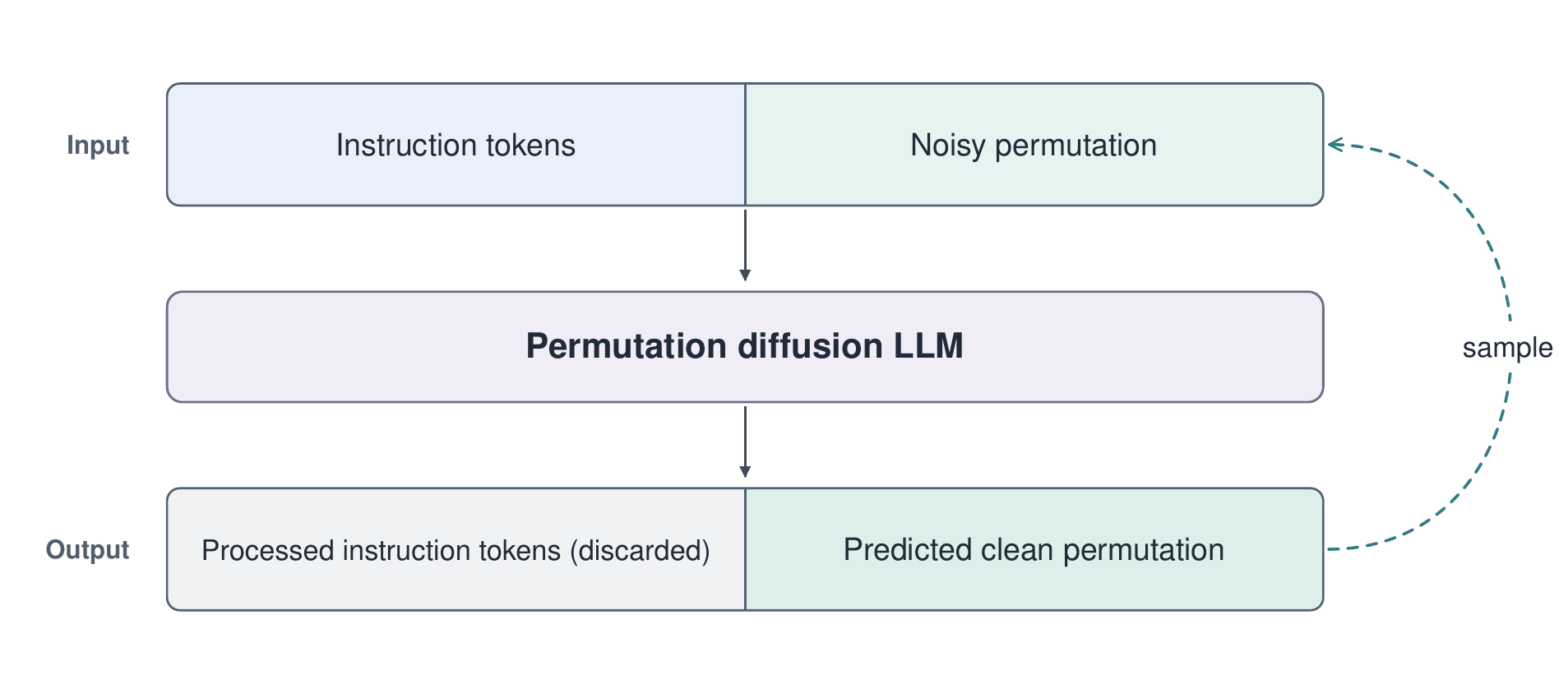}
  \caption{Permutation-diffusion sampling. The model conditions on the fixed instruction prefix and current corrupted state to predict the clean permutation. In multi-step sampling, this prediction determines the state for the next evaluation.}
  \label{fig:permutation-diffusion-workflow}
\end{figure}

\begin{table*}[!t]
  \centering
  \small
  \setlength{\tabcolsep}{4pt}
  \renewcommand{\arraystretch}{1.18}
  \caption{Design choices studied in this work. Losses and samplers listed within a family are alternatives. CE denotes cross-entropy and NLL denotes negative log-likelihood. Each model ultimately returns a discrete permutation.}
  \label{tab:method-design-space}
  \begin{tabular}{@{}p{0.16\textwidth}p{0.19\textwidth}p{0.24\textwidth}p{0.14\textwidth}p{0.19\textwidth}@{}}
    \toprule
    \raggedright Model family &
    \raggedright State and corruption &
    \raggedright Training loss options &
    \raggedright Sampling options &
    \raggedright Discrete decoding \tabularnewline
    \midrule
    \multirow{4}{=}{\raggedright Hard-assignment prediction (HA-dLLM)} &
    \multirow{4}{=}{\raggedright Hard permutation; subset shuffling} &
    \raggedright Token CE &
    \multirow{4}{=}{\raggedright Predict--recorrupt} &
    \multirow{4}{=}{\raggedright Hungarian assignment at every denoising step} \tabularnewline
    \cmidrule(lr){3-3}
    & & \raggedright Row-plus-column CE & & \tabularnewline
    \cmidrule(lr){3-3}
    & & \raggedright Sinkhorn marginal NLL & & \tabularnewline
    \cmidrule(lr){3-3}
    & & \raggedright Structured hinge & & \tabularnewline
    \midrule
    \multirow{4}{=}{\raggedright Soft-assignment prediction (SA-dLLM)} &
    \multirow{4}{=}{\raggedright Doubly stochastic matrix; affine soft corruption} &
    \multirow{4}{=}{\raggedright Anchored-rollout marginal loss} &
    \multirow{2}{=}{\raggedright Predict--recorrupt} &
    \multirow{4}{=}{\raggedright One terminal Hungarian assignment} \tabularnewline
    & & & & \tabularnewline
    \cmidrule(lr){4-4}
    & & & \multirow{2}{=}{\raggedright Residual-correction} & \tabularnewline
    & & & & \tabularnewline
    \midrule
    \raggedright Direct ordering-score prediction (OS-dLLM) &
    \raggedright Hard permutation; subset shuffling &
    \raggedright Plackett--Luce/ListMLE &
    \raggedright Predict--recorrupt &
    \raggedright Stable sorting at every denoising step; no assignment solver \tabularnewline
    \bottomrule
  \end{tabular}
\end{table*}


The algorithms below describe single-example updates; Sections~\ref{sec:experiments-models} and~\ref{sec:experiments-protocol} specify the backbone, initialization, and training protocol. The textual context forms a causal prefix followed by $K$ ranking slots with bidirectional attention among slots. HA-dLLM and SA-dLLM use unshifted prediction: the hidden state at destination slot $i$ predicts the clean assignment at that slot, unlike AR next-token prediction. OS-dLLM instead assigns an ordering score to the item in each current slot.

In mdLLM, the mask pattern reveals the corrupted positions and implicitly indicates the corruption level. Subset shuffling introduces no mask tokens, so the hard-state models receive $T$ as an additional input. A Fourier-feature MLP~\citep{tancik2020fourier} embeds the corruption level\footnote{We use \emph{corruption level} and \emph{diffusion time} interchangeably.} and adds the resulting vector to each ranking-slot embedding. Its output projection is initialized to zero, so the initial timestep embedding is zero.

\subsection{Hard-Assignment Prediction (HA-dLLM)}
\label{sec:method-token-assignment}

HA-dLLM represents every forward and reverse state as a discrete candidate-index sequence. The model predicts real-valued candidate scores; the term ``hard assignment'' describes the state representation and decoder, not the raw model output.

\subsubsection{Input and output representation}

At corruption level $T\in[K]$, the state $\pi_T$ is serialized as $(\nu_{\pi_T(1)},\ldots,\nu_{\pi_T(K)})$, and the model receives $(c,\pi_T,T)$. Training and inference use the same input representation. The Fourier-feature MLP receives the normalized level $T/K$. Let $d$ denote the common dimension of the token embeddings and backbone hidden states, and let $\mathbf{h}_i\in\mathbb{R}^{d}$ be the final-layer representation at ranking slot $i$. The vocabulary projection produces logits
\begin{equation}
  L_{iv}=\mathbf{w}_v^\top \mathbf{h}_i+b_v,
  \qquad v\in\mathcal{V}.
  \label{eq:method-vocabulary-logits}
\end{equation}
The resulting candidate-score matrix is $\mathbf{S}_\theta(c,\pi_T,T)\in\mathbb{R}^{K\times K}$, with entries $S_{ij}=L_{i\nu_j}$ as in Equation~\eqref{eq:prelim-candidate-scores}. We omit the inputs when unambiguous; all HA-dLLM objectives below operate on this matrix.

\subsubsection{Forward process and training objectives}

\paragraph{Subset-shuffling corruption.}
We use subset shuffling, a shuffle-based corruption related to prior permutation diffusion processes~\citep{zhang2025symmetricdiffusers}. We sample $T\sim\operatorname{Unif}([K])$, select $T$ output positions uniformly without replacement, and uniformly permute the candidates at those positions while leaving all others unchanged. Let $g_T^{\mathrm{tr}}$ denote the resulting position permutation. The corrupted target is
\begin{equation}
  \pi_T=\pi_0\circ g_T^{\mathrm{tr}}.
  \label{eq:method-hard-forward}
\end{equation}

We explore four alternative training objectives for HA-dLLM: per-token cross-entropy, row-plus-column assignment cross-entropy, Sinkhorn marginal negative log-likelihood, and structured hinge.

\paragraph{Per-token cross-entropy.}
Per-token cross-entropy (token CE) applies cross-entropy independently to each row of $\mathbf{S}$:
\begin{equation}
  \mathcal{L}_{\mathrm{tok}}
  =-\frac{1}{K}
    \sum_{i=1}^K
    \log
    \frac{\exp S_{i,\pi_0(i)}}
         {\sum_{j=1}^{K}\exp S_{ij}}.
  \label{eq:method-token-ce}
\end{equation}

\paragraph{Row-plus-column assignment cross-entropy.}
Row-plus-column assignment cross-entropy (assignment CE) augments token CE with a column-wise term, encouraging each candidate to occupy exactly one output position. Let $\lambda_{\mathrm{col}}\geq0$ weight this term:
\begin{align}
  \mathcal{L}_{\mathrm{col}}
  &=-\frac{1}{K}
    \sum_{j=1}^K
    \log\frac{\exp S_{\pi_0^{-1}(j),j}}{\sum_{i=1}^K\exp S_{ij}},
    \label{eq:method-column-ce}\\
  \mathcal{L}_{\mathrm{assign}}
  &=\mathcal{L}_{\mathrm{tok}}
    +\lambda_{\mathrm{col}}\mathcal{L}_{\mathrm{col}}.
  \label{eq:method-assignment-ce}
\end{align}

\paragraph{Sinkhorn marginal negative log-likelihood.}
Using Sinkhorn normalization~\citep{sinkhorn1967matrices,mena2018gumbelsinkhorn} as introduced in Section~\ref{sec:prelim-assignment}, we construct the Sinkhorn marginal negative log-likelihood (Sinkhorn NLL) loss
\begin{equation}
  \mathbf{Q}=\operatorname{SK}_{\tau}(\mathbf{S}),
  \qquad
  \mathcal{L}_{\mathrm{sink}}
  =-\frac{1}{K}
    \sum_{i=1}^K\log Q_{i,\pi_0(i)}.
  \label{eq:method-hard-sinkhorn-loss}
\end{equation}
This objective assigns high marginal probability to the ground-truth candidate at every position while coupling rows and columns through Sinkhorn normalization.

\paragraph{Structured hinge.}
We adapt the margin-rescaled structured hinge~\citep{tsochantaridis2005structured} to global assignment scores. Let $\mathbf{P}_0$ be the clean permutation matrix defined in Equation~\eqref{eq:prelim-clean-permutation}, and define the per-position mismatch cost
\begin{equation}
  \Delta_{ij}=\gamma(1-P_{0,ij}),\qquad \gamma>0.
\end{equation}
The most violated assignment and the resulting hinge are
\begin{align}
  \widetilde{\mathbf{P}}
  &=\operatorname*{arg\,max}_{\mathbf{P}\in\mathcal{P}_K}
    \left\{\langle\mathbf{S},\mathbf{P}\rangle+\langle\boldsymbol{\Delta},\mathbf{P}\rangle\right\},
    \label{eq:method-loss-augmented-assignment}\\
  \mathcal{L}_{\mathrm{hinge}}
  &=\frac{1}{K}
    \left[
      \langle\mathbf{S},\widetilde{\mathbf{P}}-\mathbf{P}_0\rangle
      +\langle\boldsymbol{\Delta},\widetilde{\mathbf{P}}\rangle
    \right]_+.
  \label{eq:method-structured-hinge}
\end{align}
The Hungarian solution $\widetilde{\mathbf{P}}$ is detached when computing the subgradient, and $[z]_+=\max\{z,0\}$.

Algorithm~\ref{alg:ha-training} summarizes a single-example training step for the selected objective.

\begin{algorithm}[!t]
  \caption{HA-dLLM training step}
  \label{alg:ha-training}
  \small
  \begin{algorithmic}[1]
    \Require Training example $(c,\pi_0)$; model parameters $\theta$; selected HA-dLLM objective and its hyperparameters
    \State Sample $T\sim\operatorname{Unif}([K])$ and select $T$ positions uniformly without replacement.
    \State Sample $g_T^{\mathrm{tr}}$ by uniformly permuting the selected positions.
    \State $\pi_T\gets\pi_0\circ g_T^{\mathrm{tr}}$
    \State $\mathbf{S}\gets\mathbf{S}_\theta(c,\pi_T,T)$
    \State Compute the training loss using Equation~\eqref{eq:method-token-ce}, \eqref{eq:method-assignment-ce}, \eqref{eq:method-hard-sinkhorn-loss}, or \eqref{eq:method-structured-hinge}, according to the selected objective.
    \State Update $\theta$ with the gradient of the loss.
    \State \Return Updated parameters $\theta$.
  \end{algorithmic}
\end{algorithm}

\subsubsection{Predict--recorrupt sampling and decoding}
\paragraph{Initialization and schedule.}
We adapt the restore-and-recorrupt sampler of Cold Diffusion~\citep{bansal2023cold} to permutations and call the resulting rule \emph{Predict--recorrupt}. For $1\leq N\leq K$, choose an integer schedule
\begin{equation}
  K=d_N>d_{N-1}>\cdots>d_0=0,
  \label{eq:method-hard-schedule}
\end{equation}
and sample one position shuffle $g_{d_n}$ at each scheduled level from the corresponding subset-shuffling corruption. These shuffles are retained throughout inference, with $g_{d_0}=\mathrm{id}$ and $g_{d_N}$ uniform over $\mathfrak{S}_K$. We initialize $\pi^{(N)}=g_{d_N}$. For $n=N,N-1,\ldots,1$, $\pi^{(n)}$ denotes the input at level $d_n$, and one model evaluation produces the state $\pi^{(n-1)}$ at level $d_{n-1}$.

\paragraph{Hungarian decoding.}
At step $n$, the model produces $\mathbf{S}^{(n)}=\mathbf{S}_\theta(c,\pi^{(n)},d_n)$. The Hungarian algorithm~\citep{kuhn1955hungarian} gives
\begin{equation}
  \widehat{\mathbf{P}}_0^{(n)}
  =\operatorname*{arg\,max}_{\mathbf{P}\in\mathcal{P}_K}
    \langle \mathbf{S}^{(n)},\mathbf{P}\rangle.
  \label{eq:method-token-hungarian}
\end{equation}
\paragraph{Predict--recorrupt update.}
Let $\widehat\pi_0^{(n)}$ denote the corresponding clean ranking. Composing this prediction with the retained shuffle at the next corruption level gives
\begin{equation}
  \boxed{
  \pi^{(n-1)}
  =\widehat\pi_0^{(n)}\circ g_{d_{n-1}}.
  }
  \label{eq:method-hard-predict-recorrupt}
\end{equation}
For $n>1$, the sampler re-corrupts the clean prediction before the next evaluation. Because $g_{d_0}=\mathrm{id}$, the final output is $\widehat\pi=\pi^{(0)}=\widehat\pi_0^{(1)}$.

Algorithm~\ref{alg:ha-sampling} summarizes Predict--recorrupt sampling for HA-dLLM.

\begin{algorithm}[!t]
  \caption{HA-dLLM Predict--recorrupt sampling}
  \label{alg:ha-sampling}
  \small
  \begin{algorithmic}[1]
    \Require Context $c$; trained model $\mathbf{S}_\theta$; budget $1\leq N\leq K$
    \State Choose integer levels $K=d_N>d_{N-1}>\cdots>d_0=0$.
    \State Draw and retain position shuffles $g_{d_0},\ldots,g_{d_N}$ according to the chosen levels.
    \State $\pi^{(N)}\gets g_{d_N}$
    \For{$n=N,N-1,\ldots,1$}
      \State $\mathbf{S}^{(n)}\gets\mathbf{S}_\theta(c,\pi^{(n)},d_n)$
      \State Use Hungarian assignment to obtain $\widehat\pi_0^{(n)}\gets\operatorname*{arg\,max}_{\pi\in\mathfrak{S}_K}\sum_{i=1}^K S^{(n)}_{i,\pi(i)}$.
      \State $\pi^{(n-1)}\gets\widehat\pi_0^{(n)}\circ g_{d_{n-1}}$
    \EndFor
    \State \Return $\widehat\pi=\pi^{(0)}$.
\end{algorithmic}
\end{algorithm}

Appendix~\ref{sec:analysis-hard-sampling} proves that Predict--recorrupt preserves permutation validity for HA-dLLM and OS-dLLM (Proposition~\ref{prop:hard-state-validity}) and analyzes why direct composition-and-inversion analogues of Residual-correction are unsuitable for hard permutation states.

\subsection{Soft-Assignment Prediction (SA-dLLM)}
\label{sec:method-soft-assignment}

\subsubsection{Input and output representation}

SA-dLLM replaces discrete token states with continuous assignments in the Birkhoff polytope $\mathcal{B}_K$ defined in Section~\ref{sec:prelim-assignment}. Its input comprises the context, a corrupted assignment, and the normalized corruption level $t\in[0,1]$. Let $\mathbf{E}\in\mathbb{R}^{K\times d}$ contain the candidate-index embeddings as rows. A state $\mathbf{X}_t\in\mathcal{B}_K$ induces ranking-slot embeddings
\begin{equation}
  \mathbf{H}_t=\mathbf{X}_t\mathbf{E}.
  \label{eq:method-soft-embedding}
\end{equation}
Prompt-token embeddings are unchanged. For ranking slots, $\mathbf{H}_t$ replaces the standard embedding lookup at the first transformer layer, after which the timestep embedding is added.

The backbone produces $\mathbf{S}_\theta(c,\mathbf{X}_t,t)\in\mathbb{R}^{K\times K}$. Write this matrix as $\mathbf{S}$ and let $\kappa>0$ be a score bound. To control the scale inherited from the autoregressive vocabulary projection, we row-center and rescale the scores for each example:
\begin{align}
  S^\circ_{ij}
  &=S_{ij}-\frac{1}{K}\sum_{j'=1}^K S_{ij'},\\
  \lVert \mathbf{S}^\circ\rVert_{\max}
  &=\max_{i,j}|S^\circ_{ij}|,\\
  \widetilde{\mathbf{S}}
  &=\frac{\mathbf{S}^\circ}
    {\max\{1,\lVert \mathbf{S}^\circ\rVert_{\max}/\kappa\}}.
  \label{eq:method-soft-score-calibration}
\end{align}
Applying the Sinkhorn normalization from Section~\ref{sec:prelim-assignment} with the appropriate temperature produces the clean soft-assignment prediction
\begin{equation}
  \mathbf{Q}_\theta(c,\mathbf{X}_t,t)
  =\operatorname{SK}_{\tau}(\widetilde{\mathbf{S}})
  \in\mathcal{B}_K
  \label{eq:method-soft-prediction}
\end{equation}

\subsubsection{Forward process and anchored-rollout training}

For each example, we sample Gaussian logits independently of the target and context and apply Sinkhorn normalization to obtain a noise matrix:
\begin{equation}
  G_{ij}\overset{\mathrm{i.i.d.}}{\sim}\mathcal{N}(0,1),
  \qquad
  \mathbf{Z}=\operatorname{SK}_{\tau}(\mathbf{G}).
  \label{eq:method-soft-endpoint}
\end{equation}
Holding $\mathbf{Z}$ fixed, the forward path linearly interpolates between the clean permutation matrix and the noise matrix:
\begin{equation}
  \mathbf{X}_t^*=(1-t)\mathbf{P}_0+t\mathbf{Z},
  \qquad t\in[0,1].
  \label{eq:method-soft-forward}
\end{equation}

SA-dLLM training uses two model evaluations, at corruption levels $1$ and $t'<1$, to provide supervision on a model-generated state~\citep{ross2011dagger}. The first evaluation predicts a clean assignment from the noise endpoint:
\begin{equation}
  \mathbf{Q}^{\mathrm{end}}=\mathbf{Q}_\theta(c,\mathbf{Z},1).
\end{equation}
We then sample $t'$ and re-corrupt $\mathbf{Q}^{\mathrm{end}}$ with the same noise matrix:
\begin{equation}
  \widetilde{\mathbf{X}}_{t'}
  =(1-t')\operatorname{sg}(\mathbf{Q}^{\mathrm{end}})+t'\mathbf{Z},
  \label{eq:method-anchored-state}
\end{equation}
where $\operatorname{sg}$ denotes stop-gradient. The second evaluation predicts the same clean target from this re-corrupted assignment:
\begin{equation}
  \mathbf{Q}^{\mathrm{roll}}=\mathbf{Q}_\theta(c,\widetilde{\mathbf{X}}_{t'},t').
\end{equation}
The rollout state contains no direct contribution from $\mathbf{P}_0$. Because the transition begins at $\mathbf{X}_1=\mathbf{Z}$, Predict--recorrupt and Residual-correction yield the same $\widetilde{\mathbf{X}}_{t'}$ for this single step.

\paragraph{Anchored-rollout marginal loss.}
For a predicted assignment $\mathbf{Q}$ with positive entries, define the marginal loss over all output positions
\begin{equation}
  \ell_{\mathrm{SA}}(\mathbf{Q},\mathbf{P}_0)
  =-\frac{1}{K}\langle \mathbf{P}_0,\log \mathbf{Q}\rangle
  =-\frac{1}{K}\sum_{i=1}^K\log Q_{i,\pi_0(i)}.
  \label{eq:method-soft-marginal-loss}
\end{equation}
The anchored-rollout objective is
\begin{equation}
  \boxed{
  \mathcal{L}_{\mathrm{anchor}}
  =\frac{1}{2}\ell_{\mathrm{SA}}(\mathbf{Q}^{\mathrm{end}},\mathbf{P}_0)
   +\frac{1}{2}\ell_{\mathrm{SA}}(\mathbf{Q}^{\mathrm{roll}},\mathbf{P}_0).
  }
  \label{eq:method-anchored-loss}
\end{equation}
As with the hard Sinkhorn objective, this marginal surrogate scores the ground-truth candidate at each position rather than computing the exact negative log-likelihood of a joint distribution over $K!$ permutations.

Algorithm~\ref{alg:sa-training} summarizes the procedure. The set $\mathcal{N}_{\mathrm{train}}$ determines the grid from which $t'$ is sampled; each example still uses exactly two model evaluations, irrespective of $N_{\mathrm{tr}}$.

\begin{algorithm}[!t]
  \caption{SA-dLLM anchored-rollout training step}
  \label{alg:sa-training}
  \small
  \begin{algorithmic}[1]
    \Require Training example $(c,\mathbf{P}_0)$; model parameters $\theta$; budget set $\mathcal{N}_{\mathrm{train}}$; Sinkhorn temperature $\tau$
    \State Sample $G_{ij}\overset{\mathrm{i.i.d.}}{\sim}\mathcal{N}(0,1)$ and set $\mathbf{Z}\gets\operatorname{SK}_{\tau}(\mathbf{G})$.
    \State $\mathbf{Q}^{\mathrm{end}}\gets\mathbf{Q}_\theta(c,\mathbf{Z},1)$ using Equation~\eqref{eq:method-soft-prediction}.
    \State Sample $N_{\mathrm{tr}}\sim\operatorname{Unif}(\mathcal{N}_{\mathrm{train}})$ and $\ell\sim\operatorname{Unif}\{0,\ldots,N_{\mathrm{tr}}-1\}$.
    \State $t'\gets1-(\ell+1)/N_{\mathrm{tr}}$
    \State $\widetilde{\mathbf{X}}_{t'}\gets(1-t')\operatorname{sg}(\mathbf{Q}^{\mathrm{end}})+t'\mathbf{Z}$
    \State $\mathbf{Q}^{\mathrm{roll}}\gets\mathbf{Q}_\theta(c,\widetilde{\mathbf{X}}_{t'},t')$
    \State $\mathcal{L}_{\mathrm{anchor}}\gets\tfrac12\ell_{\mathrm{SA}}(\mathbf{Q}^{\mathrm{end}},\mathbf{P}_0)+\tfrac12\ell_{\mathrm{SA}}(\mathbf{Q}^{\mathrm{roll}},\mathbf{P}_0)$
    \State Update $\theta$ with the gradient of the single-example loss.
    \State \Return Updated parameters $\theta$.
  \end{algorithmic}
\end{algorithm}

\subsubsection{Predict--recorrupt and Residual-correction sampling}

\paragraph{Initialization and schedule.}
We adapt the restore-and-recorrupt and improved updates of Cold Diffusion~\citep{bansal2023cold}, calling them Predict--recorrupt and Residual-correction, respectively. Both use the same trained model, noise matrix, and terminal decoder. For an NFE budget $N\geq1$, they follow the linear schedule
\begin{equation}
  1=t_N>t_{N-1}>\cdots>t_0=0,
  \qquad t_n=\frac{n}{N},
  \label{eq:method-soft-schedule}
\end{equation}
sample one noise matrix $\mathbf{Z}$, initialize $\mathbf{X}_{t_N}=\mathbf{Z}$, and reuse $\mathbf{Z}$ throughout the trajectory. The model receives $(c,\mathbf{X}_{t_n},t_n)$, while the sampler additionally uses $\mathbf{Z}$ to construct the next state. At each step $n=N,N-1,\ldots,1$, the model predicts $\mathbf{Q}^{(n)}=\mathbf{Q}_\theta(c,\mathbf{X}_{t_n},t_n)$.

\paragraph{Predict--recorrupt update.}
Predict--recorrupt applies the forward corruption to the latest prediction at the next scheduled time:
\begin{equation}
  \boxed{
  \mathbf{X}_{t_{n-1}}
  =(1-t_{n-1})\mathbf{Q}^{(n)}+t_{n-1}\mathbf{Z}.
  }
  \label{eq:method-soft-predict-recorrupt}
\end{equation}
Its terminal assignment is the last clean prediction $\mathbf{A}=\mathbf{Q}^{(1)}$.

\paragraph{Residual-correction update.}
Residual-correction subtracts the current prediction corrupted at $t_n$ and adds the same prediction corrupted at $t_{n-1}$, using the retained noise matrix $\mathbf{Z}$. Under the affine corruption in Equation~\eqref{eq:method-soft-forward}, the update simplifies to
\begin{equation}
\boxed{
  \mathbf{X}_{t_{n-1}}
  =\mathbf{X}_{t_n}+(t_n-t_{n-1})(\mathbf{Q}^{(n)}-\mathbf{Z}).}
  \label{eq:method-soft-residual-correction}
\end{equation}
We implement this update with the equivalent accumulator
\begin{align}
  \mathbf{C}_N&=\mathbf{0}_{K\times K},\\
  \mathbf{C}_{n-1}&=\mathbf{C}_n+(t_n-t_{n-1})\mathbf{Q}^{(n)},\\
  \mathbf{X}_{t_{n-1}}&=t_{n-1}\mathbf{Z}+\mathbf{C}_{n-1}.
  \label{eq:method-soft-accumulator}
\end{align}
Its terminal assignment is the step-weighted ensemble
\begin{equation}
  \mathbf{A}=\mathbf{C}_0
  =\sum_{n=1}^{N}(t_n-t_{n-1})\mathbf{Q}^{(n)}.
  \label{eq:method-soft-ensemble}
\end{equation}

If $\mathbf{Z}$ and all $\mathbf{Q}^{(n)}$ are exactly doubly stochastic, both updates remain in $\mathcal{B}_K$ because every state is a convex combination of $\mathbf{Z}$ and the predictions. Finite Sinkhorn normalization satisfies these constraints only approximately. Appendix~\ref{sec:analysis-soft-sampling} proves the validity claims and derives the Residual-correction trajectory and terminal ensemble. Neither sampler uses an intermediate discrete projection.

\paragraph{Terminal Hungarian decoding.}
For either sampler, the terminal assignment scores are
\begin{equation}
  \mathbf{W}=\log\!\left(\max\{\mathbf{A},\varepsilon\}\right),
  \label{eq:method-soft-final-scores}
\end{equation}
where $\varepsilon>0$ provides a numerical floor before the elementwise logarithm. A single Hungarian projection then yields
\begin{equation}
  \widehat\pi
  =\operatorname*{arg\,max}_{\pi\in\mathfrak{S}_K}
    \sum_{i=1}^K W_{i,\pi(i)}.
  \label{eq:method-soft-final-projection}
\end{equation}
The soft-state updates neither clip nor renormalize intermediate states; the terminal Hungarian assignment is the only discrete projection.

Algorithms~\ref{alg:sa-predict-recorrupt-sampling} and~\ref{alg:sa-residual-correction-sampling} summarize Predict--recorrupt and Residual-correction sampling, respectively. Each uses one terminal Hungarian assignment.

\begin{algorithm}[!t]
  \caption{SA-dLLM Predict--recorrupt sampling}
  \label{alg:sa-predict-recorrupt-sampling}
  \small
  \begin{algorithmic}[1]
    \Require Context $c$; trained model $\mathbf{Q}_\theta$; budget $N\geq1$; Sinkhorn temperature $\tau>0$; numerical floor $\varepsilon>0$
    \State Set $t_n\gets n/N$ for $n=0,\ldots,N$.
    \State Sample $G_{ij}\overset{\mathrm{i.i.d.}}{\sim}\mathcal{N}(0,1)$ and set $\mathbf{Z}\gets\operatorname{SK}_{\tau}(\mathbf{G})$.
    \State $\mathbf{X}_{t_N}\gets\mathbf{Z}$
    \For{$n=N,N-1,\ldots,1$}
      \State $\mathbf{Q}^{(n)}\gets\mathbf{Q}_\theta(c,\mathbf{X}_{t_n},t_n)$
      \State $\mathbf{X}_{t_{n-1}}\gets(1-t_{n-1})\mathbf{Q}^{(n)}+t_{n-1}\mathbf{Z}$
    \EndFor
    \State $\mathbf{W}\gets\log\!\left(\max\{\mathbf{Q}^{(1)},\varepsilon\}\right)$
    \State Use Hungarian assignment to obtain $\widehat\pi\gets\operatorname*{arg\,max}_{\pi\in\mathfrak{S}_K}\sum_{i=1}^K W_{i,\pi(i)}$.
    \State \Return $\widehat\pi$.
  \end{algorithmic}
\end{algorithm}

\begin{algorithm}[!t]
  \caption{SA-dLLM Residual-correction sampling}
  \label{alg:sa-residual-correction-sampling}
  \small
  \begin{algorithmic}[1]
    \Require Context $c$; trained model $\mathbf{Q}_\theta$; budget $N\geq1$; Sinkhorn temperature $\tau>0$; numerical floor $\varepsilon>0$
    \State Set $t_n\gets n/N$ for $n=0,\ldots,N$.
    \State Sample $G_{ij}\overset{\mathrm{i.i.d.}}{\sim}\mathcal{N}(0,1)$ and set $\mathbf{Z}\gets\operatorname{SK}_{\tau}(\mathbf{G})$.
    \State $\mathbf{X}_{t_N}\gets\mathbf{Z}$; $\mathbf{C}_N\gets\mathbf{0}_{K\times K}$
    \For{$n=N,N-1,\ldots,1$}
      \State $\mathbf{Q}^{(n)}\gets\mathbf{Q}_\theta(c,\mathbf{X}_{t_n},t_n)$
      \State $\mathbf{C}_{n-1}\gets\mathbf{C}_n+(t_n-t_{n-1})\mathbf{Q}^{(n)}$
      \State $\mathbf{X}_{t_{n-1}}\gets t_{n-1}\mathbf{Z}+\mathbf{C}_{n-1}$
    \EndFor
    \State $\mathbf{W}\gets\log\!\left(\max\{\mathbf{C}_0,\varepsilon\}\right)$
    \State Use Hungarian assignment to obtain $\widehat\pi\gets\operatorname*{arg\,max}_{\pi\in\mathfrak{S}_K}\sum_{i=1}^K W_{i,\pi(i)}$.
    \State \Return $\widehat\pi$.
  \end{algorithmic}
\end{algorithm}

\subsection{Direct Ordering-Score Prediction (OS-dLLM)}
\label{sec:method-permutation-head}

\subsubsection{Input and output representation}

OS-dLLM uses the same prompt, hard permutation state, slot-attention pattern, and Fourier-feature timestep embedding as HA-dLLM. Rather than scoring assignments or enumerating $K!$ permutations, it maps the hidden state of each current source slot $s$ to a scalar ordering score
\begin{equation}
  q_s=\mathbf{a}^\top \mathbf{h}_s+b,
  \qquad s\in[K].
  \label{eq:method-ordering-key}
\end{equation}
Here $\mathbf{a}\in\mathbb{R}^{d}$ and $b\in\mathbb{R}$ are learned parameters. Sorting the $K$ scores determines the destination order. We denote the source-slot score vector by $\mathbf{q}_\theta(c,\pi_T,T)\in\mathbb{R}^K$.

\subsubsection{Forward process and training loss}

Training uses the subset-shuffling corruption in Equation~\eqref{eq:method-hard-forward}. Relative to the current ranking $\pi_T$, define the target source order
\begin{equation}
  \sigma_T^*=\pi_T^{-1}\circ\pi_0,
  \qquad
  \pi_T\!\left(\sigma_T^*(i)\right)=\pi_0(i).
  \label{eq:method-target-source-order}
\end{equation}
Thus $\sigma_T^*(i)$ is the current source slot containing the candidate assigned to destination rank $i$.

\paragraph{Listwise maximum likelihood.}
We train all $K$ positions with the negative log-likelihood of the Plackett--Luce model~\citep{plackett1975permutations}, using the ListMLE objective~\citep{xia2008listmle}:
\begin{equation}
  \mathcal{L}_{\mathrm{perm}}
  =\frac{1}{K}\sum_{i=1}^K
    \left[
      \log\sum_{r=i}^K\exp q_{\sigma_T^*(r)}
      -q_{\sigma_T^*(i)}
    \right].
  \label{eq:method-listmle}
\end{equation}

Algorithm~\ref{alg:os-training} summarizes the ListMLE training step for one example.

\begin{algorithm}[!t]
  \caption{OS-dLLM training step}
  \label{alg:os-training}
  \small
  \begin{algorithmic}[1]
    \Require Training example $(c,\pi_0)$; model parameters $\theta$
    \State Sample $T\sim\operatorname{Unif}([K])$ and select $T$ positions uniformly without replacement.
    \State Sample $g_T^{\mathrm{tr}}$ by uniformly permuting the selected positions.
    \If{$g_T^{\mathrm{tr}}$ is the identity}
      \State Replace $g_T^{\mathrm{tr}}$ by a random transposition, following Section~\ref{sec:experiments-protocol}.
    \EndIf
    \State $\pi_T\gets\pi_0\circ g_T^{\mathrm{tr}}$
    \State Compute source-slot scores $\mathbf{q}\gets\mathbf{q}_\theta(c,\pi_T,T)$.
    \State $\sigma_T^*\gets\pi_T^{-1}\circ\pi_0$
    \State Compute $\mathcal{L}_{\mathrm{perm}}$ over all $K$ positions using Equation~\eqref{eq:method-listmle}.
    \State Update $\theta$ with the gradient of the single-example loss.
    \State \Return Updated parameters $\theta$.
  \end{algorithmic}
\end{algorithm}

\subsubsection{Predict--recorrupt sampling and decoding}

\paragraph{Sorting-based decoding.}
For $n=N,N-1,\ldots,1$, let $\mathbf{q}^{(n)}\in\mathbb{R}^K$ contain the source-slot scores. A stable descending sort yields the source order
\begin{equation}
  \widehat\sigma^{(n)}
  =\operatorname{argsort}(\mathbf{q}^{(n)};\text{descending}),
\end{equation}
which is composed with the current ranking to predict the clean state:
\begin{equation}
  \widehat\pi_0^{(n)}
  =\pi^{(n)}\circ\widehat\sigma^{(n)}.
  \label{eq:method-permutation-head-decode}
\end{equation}
The decoder therefore uses sorting rather than Hungarian assignment.

\paragraph{Predict--recorrupt update.}
Multi-step inference uses the same fixed trajectory and Predict--recorrupt update as Equation~\eqref{eq:method-hard-predict-recorrupt}:
\begin{equation}
  \pi^{(n-1)}
  =\widehat\pi_0^{(n)}\circ g_{d_{n-1}}.
  \label{eq:method-permutation-predict-recorrupt}
\end{equation}
Algorithm~\ref{alg:os-sampling} summarizes sorting-based Predict--recorrupt sampling.

\begin{algorithm}[!t]
  \caption{OS-dLLM Predict--recorrupt sampling}
  \label{alg:os-sampling}
  \small
  \begin{algorithmic}[1]
    \Require Context $c$; trained model $\mathbf{q}_\theta$; budget $1\leq N\leq K$
    \State Choose integer levels $K=d_N>d_{N-1}>\cdots>d_0=0$.
    \State Draw and retain position shuffles $g_{d_0},\ldots,g_{d_N}$ as in Algorithm~\ref{alg:ha-sampling}.
    \State $\pi^{(N)}\gets g_{d_N}$
    \For{$n=N,N-1,\ldots,1$}
      \State $\mathbf{q}^{(n)}\gets\mathbf{q}_\theta(c,\pi^{(n)},d_n)$
      \State $\widehat\sigma^{(n)}\gets\operatorname{argsort}(\mathbf{q}^{(n)};\text{descending})$ using a stable sort.
      \State $\widehat\pi_0^{(n)}\gets\pi^{(n)}\circ\widehat\sigma^{(n)}$
      \State $\pi^{(n-1)}\gets\widehat\pi_0^{(n)}\circ g_{d_{n-1}}$
    \EndFor
    \State \Return $\widehat\pi=\pi^{(0)}$.
  \end{algorithmic}
\end{algorithm}

%% file: files/06_experiments.tex

\section{Experiments}
\label{sec:experiments}

We evaluate one-evaluation ranking quality, projection-free validity, and sensitivity to the NFE budget. We also compare the training objectives and sampling procedures within each model family.

\subsection{Experimental Setup}
\label{sec:experiments-setup}

\subsubsection{Datasets and splits}

We use the Beauty, Sports and Outdoors, and Toys and Games categories of the Amazon-2014 review dataset~\citep{mcauley2015image}\footnote{Evaluation on the internal system is ongoing.}. We apply standard 5-core filtering and a leave-one-out protocol, yielding one query per retained user in each split. Table~\ref{tab:dataset-statistics} summarizes the processed data.

\begin{table}[!t]
  \centering
  \small
  \caption{Statistics of the three processed Amazon review datasets.}
  \label{tab:dataset-statistics}
  \begin{tabular}{lrrrr}
    \toprule
    Dataset & \# Users & \# Items & Mean Len. & Median Len. \\
    \midrule
    Beauty              & $22{,}363$ & $12{,}101$ & 8.87 & 6 \\
    Sports and Outdoors & $35{,}598$ & $18{,}357$ & 8.32 & 6 \\
    Toys and Games      & $19{,}412$ & $11{,}924$ & 8.63 & 6 \\
    \bottomrule
  \end{tabular}
\end{table}

\subsubsection{Candidate construction and target relabeling}
\label{sec:experiments-candidates}


Each reranking example contains user-history text and textual descriptions of $K$ candidates. The target is a sequence of candidate indices ordered from most to least relevant.

For each query, a pointwise cross-encoder reranker~\citep{xu2026case} scores the item catalog. For $K\in\{10,20\}$, we retain the held-out positive and the $K-1$ highest-scoring nonpositive items as hard negatives. The positive is the only item with an observed relevance label. We place it first in the clean order and sort the negatives by decreasing cross-encoder score; consequently, the negative order is a pseudo-label rather than an independently annotated relevance judgment.


Before presentation, we randomly permute the candidate list and relabel the target with the inverse permutation. Formally, let $\bar x_j$ and $\bar\pi_0$ denote the canonical candidate descriptions and target order, where $\bar\pi_0$ is the identity under the construction above. We sample $\eta\in\mathfrak{S}_K$, present $x_j=\bar x_{\eta(j)}$, and set $\pi_0=\eta^{-1}\circ\bar\pi_0$. This presentation shuffle is part of dataset construction and is distinct from diffusion corruption.

\subsubsection{Prompt and output representation}

The prompt contains a system instruction, the user's purchase history, and $K$ enumerated candidate descriptions. Candidate $j$ is prefixed by \texttt{<idx\_j>}. The final instruction requests every index exactly once, ordered from most to least relevant and enclosed by \texttt{<rank>} and \texttt{</rank>}.


\subsection{Metrics}
\label{sec:experiments-metrics}

Let $r\in\{1,\ldots,K\}$ denote the one-based rank of the held-out positive in a valid prediction. Invalid outputs receive zero for all relevance metrics. We average each metric over test queries.

\paragraph{Recall.}
Recall@$k$ indicates whether the positive appears among the first $k$ positions:
\begin{equation}
  \operatorname{R@}k=\mathbf{1}[r\leq k].
  \label{eq:experiment-relevance-metrics}
\end{equation}
We report Recall@1 and Recall@5.

\paragraph{Mean reciprocal rank.}
Mean reciprocal rank (MRR) is
\begin{equation}
  \operatorname{MRR}=\frac{1}{r}.
\end{equation}

\paragraph{Normalized discounted cumulative gain.}
With one relevant item, normalized discounted cumulative gain at cutoff $k$ is
\begin{equation}
  \operatorname{NDCG@}k
  =\frac{\mathbf{1}[r\leq k]}{\log_2(r+1)}.
\end{equation}
We report NDCG@1 and NDCG@5. Because every example has exactly one relevant item, NDCG@1 and Recall@1 are numerically identical; duplicate NDCG@1 values are omitted from the main tables.

\paragraph{Position accuracy.}
Position accuracy is the fraction of positions whose predicted candidate matches the clean target at the same position. It measures recovery of the full target permutation rather than only the positive item's rank.

\paragraph{Pairwise accuracy.}
Pairwise accuracy is the fraction of candidate pairs that appear in the same relative order as in the clean target. A pair receives credit only if both candidates occur exactly once, thereby penalizing omissions and duplicates.

\paragraph{Valid rate.}
Valid rate is the fraction of outputs containing every presented candidate index exactly once. We apply no post hoc projection during evaluation; position and pairwise accuracy are computed from the raw output.


\paragraph{Number of function evaluations.}
The number of function evaluations (NFE) counts language-model forward evaluations used to produce a ranking. AR requires one sequential evaluation per candidate token, whereas each diffusion evaluation scores all $K$ ranking positions in parallel.

\subsection{Baselines}
\label{sec:experiments-models}

All learned models use the same pretrained Qwen3-0.6B backbone\footnote{Model checkpoint: \url{https://huggingface.co/Qwen/Qwen3-0.6B}. Official overview: \url{https://qwenlm.github.io/blog/qwen3/}.}, candidate sets, prompt format, candidate-index vocabulary, and presentation shuffle. We include the following baselines:
\begin{itemize}
  \item \textbf{Random valid permutation.} A uniformly random candidate order provides a nonlearned reference with guaranteed validity.
  \item \textbf{AR reranker.} The model generates $K$ candidate-index tokens with permutation-constrained greedy decoding.
  \item \textbf{Masked diffusion language model.} The mdLLM predicts masked slots in parallel, and its constrained decoder commits a one-to-one assignment over the remaining positions and candidates at each step.
\end{itemize}

The evaluated configurations follow Table~\ref{tab:method-design-space}. HA-dLLM uses token CE, row-plus-column assignment CE, Sinkhorn marginal NLL, or structured hinge training, followed by Predict--recorrupt sampling. SA-dLLM uses Predict--recorrupt or Residual-correction sampling, and OS-dLLM uses ListMLE with Predict--recorrupt sampling. HA-dLLM applies Hungarian decoding at every denoising step, SA-dLLM applies it only at termination, and OS-dLLM uses sorting.




Complete optimization, checkpoint-selection, and evaluation settings are provided in Appendix~\ref{sec:implementation-details}.

\subsection{Parallel Reranking with One Function Evaluation}
\label{sec:experiments-main}

\begin{table*}[!t]
  \centering
  \small
  \caption{Full-test Recall@1. AR uses $K$ sequential function evaluations; all other learned models use NFE$=1$. ``H'' denotes Hungarian assignment and ``sort'' denotes the ordering-score decoder. AR values marked $\dagger$ are from earlier full-test controls; all other values are from the metric-complete evaluation. Boldface indicates the highest pre-rounding value in each column.}
  \label{tab:main-one-call}
  \begin{tabular}{lcrrrrrr}
    \toprule
    & & \multicolumn{3}{c}{$K=10$} & \multicolumn{3}{c}{$K=20$} \\
    \cmidrule(lr){3-5}\cmidrule(lr){6-8}
    Model / decoder & NFE & Beauty & Sports & Toys & Beauty & Sports & Toys \\
    \midrule
    Random valid permutation & 0 & .100 & .100 & .100 & .050 & .050 & .050 \\
    AR$^{\dagger}$ & $K$ & .863 & .892 & \textbf{.852} & \textbf{.828} & .875 & \textbf{.792} \\
    \midrule
    mdLLM + constrained/H & 1 & .883 & .917 & .840 & .823 & \textbf{.875} & .787 \\
    HA-dLLM, token CE + H & 1 & .879 & .913 & .845 & .817 & .872 & .782 \\
    HA-dLLM, assignment CE + H & 1 & .880 & .918 & .847 & .820 & .873 & .790 \\
    HA-dLLM, Sinkhorn NLL + H & 1 & .881 & \textbf{.918} & .847 & .817 & .871 & .786 \\
    HA-dLLM, structured hinge + H & 1 & .855 & .885 & .848 & .813 & .854 & .771 \\
    SA-dLLM anchored + final H & 1 & \textbf{.889} & .914 & .851 & .823 & .869 & .788 \\
    OS-dLLM + sort & 1 & .840 & .887 & .814 & .782 & .842 & .741 \\
    \bottomrule
  \end{tabular}
\end{table*}

Table~\ref{tab:main-one-call} compares NFE$=1$ diffusion prediction with $K$-step AR decoding. The strongest diffusion configuration exceeds the reported AR control on Beauty and Sports at $K=10$, matches it on Sports at $K=20$, and differs by at most $0.5$ Recall@1 percentage points in the remaining settings. No diffusion family dominates all six settings: SA-dLLM performs best on Beauty at $K=10$ and matches constrained mdLLM on Beauty at $K=20$, whereas HA-dLLM variants lead the proposed families on Sports at both candidate-set sizes and on Toys at $K=20$. Because the AR entries come from earlier full-test controls, these cross-family comparisons should be interpreted as provisional until AR is regenerated with the metric-complete evaluator.


\subsection{Constraint Satisfaction Without Assignment Projection}
\label{sec:experiments-validity}

Table~\ref{tab:projection-free} compares discrete rankings produced without assignment projection. At NFE$=1$, raw mdLLM has near-zero validity because independent token predictions repeat or omit candidates, whereas OS-dLLM is valid by construction. At NFE$=K$, raw mdLLM approaches unit validity and achieves higher Recall@1 than OS-dLLM in all six settings. Sorting therefore guarantees validity at every budget, whereas raw mdLLM relies on iterative commitment to recover valid permutations.

\begin{table*}[!t]
  \centering
  \small
  \caption{Projection-free Recall@1 and valid rate at NFE$=1$ and NFE$=K$. Each cell reports $N=1\rightarrow N=K$. Values are rounded to three decimals; 1.000 does not necessarily indicate exact validity of one.}
  \label{tab:projection-free}
  \begin{tabular}{lcll}
    \toprule
    Dataset & $K$ & Raw mdLLM & OS-dLLM \\
    \midrule
    Beauty & 10 & .013/.013 $\rightarrow$ .883/1.000 & .840/1.000 $\rightarrow$ .854/1.000 \\
    Beauty & 20 & .000/.000 $\rightarrow$ .823/1.000 & .782/1.000 $\rightarrow$ .787/1.000 \\
    Sports & 10 & .039/.040 $\rightarrow$ .916/1.000 & .887/1.000 $\rightarrow$ .898/1.000 \\
    Sports & 20 & .000/.000 $\rightarrow$ .874/1.000 & .842/1.000 $\rightarrow$ .846/1.000 \\
    Toys & 10 & .001/.001 $\rightarrow$ .842/1.000 & .814/1.000 $\rightarrow$ .820/1.000 \\
    Toys & 20 & .000/.000 $\rightarrow$ .789/1.000 & .741/1.000 $\rightarrow$ .739/1.000 \\
    \bottomrule
  \end{tabular}
\end{table*}


\subsection{Effect of Additional Function Evaluations}
\label{sec:experiments-steps}

Table~\ref{tab:beauty-k20-steps} reports Recall@1 across NFE budgets on Beauty with $K=20$. Constrained mdLLM, Sinkhorn-trained HA-dLLM, and SA-dLLM change little; token CE and assignment CE degrade, whereas OS-dLLM improves by $0.5$ percentage points. The effect of additional evaluations is therefore parameterization dependent.

\begin{table*}[!t]
  \centering
  \small
  \caption{Recall@1 versus NFE on Beauty with $K=20$. HA-dLLM and OS-dLLM use Predict--recorrupt sampling; SA-dLLM uses the selected anchored-rollout checkpoint.}
  \label{tab:beauty-k20-steps}
  \begin{tabular}{lrrrrr}
    \toprule
    Model / decoder & $N=1$ & $N=2$ & $N=5$ & $N=10$ & $N=20$ \\
    \midrule
    mdLLM + constrained/H & .823 & .822 & .822 & .823 & .823 \\
    HA-dLLM, token CE + H & .817 & .802 & .793 & .784 & .796 \\
    HA-dLLM, assignment CE + H & .820 & .811 & .814 & .813 & .811 \\
    HA-dLLM, Sinkhorn NLL + H & .817 & .817 & .819 & .817 & .818 \\
    HA-dLLM, structured hinge + H & .813 & .810 & .810 & .809 & .810 \\
    SA-dLLM, Predict--recorrupt & .823 & .822 & .822 & .822 & .822 \\
    SA-dLLM, Residual-correction & .823 & .822 & .823 & .823 & .823 \\
    OS-dLLM + sort & .782 & .781 & .783 & .784 & .787 \\
    \bottomrule
  \end{tabular}
\end{table*}

Table~\ref{tab:beauty-k20-deltas} shows that stable Recall@1 can conceal changes in the full order. OS-dLLM gains $0.497$ Recall@1 percentage points while losing $0.788$ position-accuracy and $0.860$ pairwise-accuracy points. Sinkhorn-trained HA-dLLM and SA-dLLM remain stable on both relevance and full-order metrics.

\begin{table*}[!t]
  \centering
  \small
  \caption{Metric changes from NFE$=1$ to NFE$=20$ on Beauty with $K=20$, in percentage points. Positive values indicate improvement.}
  \label{tab:beauty-k20-deltas}
  \begin{tabular}{lrrrrrr}
    \toprule
    Model / decoder & $\Delta$R@1 & $\Delta$R@5 & $\Delta$MRR & $\Delta$NDCG@5
      & $\Delta$Position & $\Delta$Pairwise \\
    \midrule
    mdLLM + constrained/H & $-.013$ & $+.004$ & $+.000$ & $-.006$ & $-.078$ & $-.128$ \\
    HA-dLLM, token CE + H & $-2.070$ & $-2.209$ & $-1.959$ & $-2.123$ & $-2.328$ & $-.965$ \\
    HA-dLLM, assignment CE + H & $-.970$ & $-.974$ & $-.907$ & $-.964$ & $-2.194$ & $-.570$ \\
    HA-dLLM, Sinkhorn NLL + H & $+.045$ & $+.080$ & $+.051$ & $+.058$ & $+.074$ & $+.005$ \\
    HA-dLLM, structured hinge + H & $-.295$ & $-.165$ & $-.241$ & $-.231$ & $-1.591$ & $-1.314$ \\
    SA-dLLM, Predict--recorrupt & $-.026$ & $-.067$ & $-.039$ & $-.042$ & $+.014$ & $+.077$ \\
    SA-dLLM, Residual-correction & $+.009$ & $-.005$ & $+.002$ & $+.003$ & $+.008$ & $+.032$ \\
    OS-dLLM + sort & $+.497$ & $-.336$ & $+.172$ & $+.040$ & $-.788$ & $-.860$ \\
    \bottomrule
  \end{tabular}
\end{table*}

Across the six evaluated settings, Sinkhorn NLL is the most consistently stable HA-dLLM objective as NFE increases. OS-dLLM improves from NFE$=1$ to NFE$=K$ in five settings but remains below the strongest HA-dLLM in each setting.

%% file: files/07_conclusion.tex
\section{Conclusion}
\label{sec:conclusion}

We studied conditional permutation generation with language-model denoisers through three parameterizations: hard-assignment prediction, soft-assignment prediction, and direct ordering-score prediction. The corresponding model families impose permutation constraints through per-step assignment, terminal assignment, and sorting, respectively, and require distinct corruption processes, objectives, and samplers.

The experiments demonstrate that useful rankings can be produced at NFE$=1$ while exposing tradeoffs among ranking quality, validity, and iterative stability. Anchored SA-dLLM remains stable across NFE budgets, whereas OS-dLLM guarantees validity without Hungarian assignment but trails the strongest constrained decoders. Additional function evaluations do not consistently improve ranking quality. These findings support treating corruption, prediction parameterization, objective, and sampler as coupled design choices. Evaluation on larger candidate sets, multiple training seeds, independently judged full rankings, and controlled end-to-end latency measurements remains important future work.

%% file: files/08_appendix.tex
\section{Implementation Details}
\label{sec:implementation-details}

This section specifies the input construction, optimization, checkpoint-selection, and evaluation settings used in the experiments.

\subsection{Input Serialization and Sequence Lengths}

We extend the pretrained vocabulary $\mathcal{V}_{\mathrm{raw}}$ with the candidate-index tokens $\mathcal{V}_{\mathrm{idx}}$, the mdLLM mask token \texttt{<rank\_mask>}, and the delimiters \texttt{<rank>} and \texttt{</rank>}. The serialized AR target contains $K+2$ tokens: an opening delimiter, $K$ candidate indices, and a closing delimiter marking the end of the ranking. Equation~\eqref{eq:prelim-ar-loss} includes only the candidate-token component. Input states and prediction parameterizations follow Sections~\ref{sec:preliminaries} and~\ref{sec:method}. The maximum sequence length is 2,048 for $K=10$ and 4,096 for $K=20$. When truncation is required, we left-truncate only the prompt prefix and preserve the complete ranking region.

\subsection{Training Configuration}
\label{sec:experiments-protocol}

\subsubsection{Initialization and shared optimization}

Each diffusion model is initialized from the AR checkpoint trained for the same dataset and candidate budget. The Fourier-feature MLP and scalar ordering-score projection, when present, are initialized separately; assignment-based models reuse the pretrained vocabulary projection. We train with AdamW, weight decay $0.01$, 5\% linear warmup followed by cosine decay, gradient-norm clipping at 1.0, a bfloat16 backbone, gradient checkpointing, and random seed 42. Numerically sensitive assignment mixing, Sinkhorn normalization, and trajectory arithmetic use FP32. The effective batch size is 128, and training runs for at most ten epochs. Diffusion models use a learning rate of $10^{-4}$; AR learning rates are selected from $\{2\times10^{-5},10^{-4}\}$ on validation data. All checkpoint selection uses validation data only.

\subsubsection{Hard-state models}

Both hard-state families use explicit timestep conditioning, sample $T$ uniformly from $[K]$, and omit self-conditioning. The Fourier-feature MLP receives $T/K$ during training and $d_n/K$ during inference. If a sampled training shuffle leaves the permutation unchanged, including when $T=1$, we replace it with a random transposition. Identity corruptions are therefore excluded during training, and the $T=K$ training distribution is not uniform over all permutations, whereas inference begins from a uniform permutation. HA-dLLM objectives supervise all $K$ positions, including positions unchanged by corruption; OS-dLLM applies ListMLE to the full order. Assignment CE uses $\lambda_{\mathrm{col}}=1$, hard Sinkhorn NLL uses $\tau=0.1$ with 20 Sinkhorn iterations, and structured hinge uses $\gamma=1$.

\subsubsection{Soft-state model}

SA-dLLM uses noise-endpoint temperature $\tau_Z=0.7$, prediction temperature $\tau_Q=1.0$, score bound $\kappa=4.0$, and Sinkhorn tolerance $10^{-5}$. Sinkhorn normalization is capped at 256 iterations, with one optional 16-iteration refinement for near-threshold cases. The anchored objective uses two supervised backbone evaluations per example and samples $N_{\mathrm{tr}}$ from $\mathcal{N}_{\mathrm{train}}=\{2,5,10\}$ for $K=10$ and from $\mathcal{N}_{\mathrm{train}}=\{2,5,10,20\}$ for $K=20$.







\subsection{Evaluation Configuration}

For $K=10$, we evaluate $N\in\{1,2,5,10\}$; for $K=20$, we evaluate $N\in\{1,2,5,10,20\}$. The NFE$=1$ SA-dLLM prediction is identical for Predict--recorrupt and Residual-correction and is counted once. For each SA-dLLM example, we reuse the same deterministic noise endpoint across budgets and samplers. For HA-dLLM and OS-dLLM, we report only Predict--recorrupt; preliminary hard-state Residual-correction runs are excluded.

All full-test evaluations use eight parallel workers. The per-worker batch size is 32 for hard-state models and SA-dLLM with $K=10$, and 16 for SA-dLLM with $K=20$. We save both predicted and target permutations.

%% file: files/09_hard_sampling_analysis.tex
\subsection{Analysis of Hard-State Sampling}
\label{sec:analysis-hard-sampling}

We analyze the Predict--recorrupt updates for HA-dLLM and OS-dLLM, defined in Equations~\eqref{eq:method-hard-predict-recorrupt} and~\eqref{eq:method-permutation-predict-recorrupt} and summarized in Algorithms~\ref{alg:ha-sampling} and~\ref{alg:os-sampling}.

\subsubsection{Validity of Hard-State Predict--Recorrupt Sampling}
Under the schedule in Equation~\eqref{eq:method-hard-schedule}, one shuffle $g_{d_n}$ is sampled for each corruption level and retained throughout the denoising trajectory. For this fixed collection, define $D_d(\pi)=\pi\circ g_d$. Predict--recorrupt applies $D_{d_{n-1}}$ to each decoded clean prediction. Validity follows from closure of permutations under composition.

\begin{proposition}[Hard-state validity of Predict--recorrupt sampling]
\label{prop:hard-state-validity}
Suppose $\pi^{(N)}\in\mathfrak{S}_K$, every decoded clean prediction $\widehat\pi_0^{(n)}\in\mathfrak{S}_K$, and every retained shuffle $g_{d_n}\in\mathfrak{S}_K$, with $g_{d_0}=\mathrm{id}$. For any schedule $K=d_N>\cdots>d_0=0$ and every $n=N,\ldots,1$, Predict--recorrupt satisfies
\begin{equation}
  \pi^{(n-1)}
  =D_{d_{n-1}}(\widehat\pi_0^{(n)})
  \in\mathfrak{S}_K,
  \qquad
  \pi^{(0)}=\widehat\pi_0^{(1)}.
  \label{eq:analysis-hard-validity}
\end{equation}
\end{proposition}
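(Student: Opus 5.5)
The argument is a direct unrolling of the update rule combined with closure of $\mathfrak{S}_K$ under composition. I will treat the HA-dLLM and OS-dLLM samplers together: both use the same update in Equations~\eqref{eq:method-hard-predict-recorrupt} and~\eqref{eq:method-permutation-predict-recorrupt}. They differ only in how the clean prediction $\widehat\pi_0^{(n)}$ is decoded, and the proposition takes $\widehat\pi_0^{(n)}\in\mathfrak{S}_K$ as a hypothesis.

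\textbf{The identity $\pi^{(n-1)}=D_{d_{n-1}}(\widehat\pi_0^{(n)})$.} For each fixed $n\in\{N,\ldots,1\}$, this is the definition of the update with $D_d(\pi)=\pi\circ g_d$.

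\textbf{Membership in $\mathfrak{S}_K$.} I would note that $\widehat\pi_0^{(n)}$ and $g_{d_{n-1}}$ are both bijections of $[K]$, so their composition is a bijection. Hence $\pi^{(n-1)}\in\mathfrak{S}_K$. This step needs no induction, because the hypothesis already covers every decoded prediction.

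\textbf{Where induction is actually needed.} The hypothesis that every $\widehat\pi_0^{(n)}$ lies in $\mathfrak{S}_K$ depends on the current input being a permutation. For HA-dLLM, Hungarian assignment returns a permutation matrix from any score matrix, so this part is unconditional. For OS-dLLM, $\widehat\pi_0^{(n)}=\pi^{(n)}\circ\widehat\sigma^{(n)}$, where $\widehat\sigma^{(n)}$ is a stable argsort and therefore a permutation. So validity of $\widehat\pi_0^{(n)}$ follows from validity of $\pi^{(n)}$.

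I would therefore add a short downward induction on $n$:
\begin{itemize}
\item \emph{Base case:} $\pi^{(N)}=g_{d_N}\in\mathfrak{S}_K$.
\item \emph{Inductive step:} if $\pi^{(n)}\in\mathfrak{S}_K$, then $\widehat\pi_0^{(n)}\in\mathfrak{S}_K$ by the decoder argument above, and then $\pi^{(n-1)}\in\mathfrak{S}_K$ by closure.
\end{itemize}
This shows that the proposition's assumptions are self-consistent for both decoders.

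\textbf{Terminal identity.} Setting $n=1$ gives $\pi^{(0)}=\widehat\pi_0^{(1)}\circ g_{d_0}=\widehat\pi_0^{(1)}\circ\mathrm{id}=\widehat\pi_0^{(1)}$.

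\textbf{Main obstacle.} The only subtle point is the OS-dLLM decoder. Its validity depends on the input state, so the downward induction matters there. One also has to check that stable argsort yields a genuine bijection even when scores tie. This follows because the fixed tie-breaking rule makes the argsort a total order on the $K$ slots.
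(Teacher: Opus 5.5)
Your proposal is correct and follows essentially the same route as the paper's proof: closure of $\mathfrak{S}_K$ under composition, induction from the valid initial state, $g_{d_0}=\mathrm{id}$ for the terminal identity, and a decoder-specific check for each family (Hungarian assignment for HA-dLLM, and for OS-dLLM a stable sort composed with the current permutation). Your remark that the induction is needed only to justify the OS-dLLM decoding hypothesis organizes the same argument slightly more explicitly than the paper does.
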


\begin{proof}
Both $\widehat\pi_0^{(n)}$ and $g_{d_{n-1}}$ are bijections on $[K]$; their composition is therefore a bijection. Induction from the valid initial state establishes validity at every step. At the final step, $g_{d_0}=\mathrm{id}$ returns the last decoded clean prediction. Hungarian assignment in Equation~\eqref{eq:method-token-hungarian} guarantees a valid HA-dLLM prediction. For OS-dLLM, the stable sort in Equation~\eqref{eq:method-permutation-head-decode} produces a permutation of source slots, whose composition with the current permutation is also valid. The result is independent of prediction accuracy.
\end{proof}

The update is also consistent with a fixed clean prediction. If $\widehat\pi_0^{(n)}=\pi_\star$ at every evaluation, then each updated state equals $D_{d_{n-1}}(\pi_\star)$ and the final output is $\pi_\star$. This claim concerns states produced after the first evaluation. The sampled initial state is $g_{d_N}$, whereas the corresponding forward-corrupted state would be $\pi_\star\circ g_{d_N}$; these states need not coincide for the same shuffle. When $g_{d_N}$ is uniform and $\pi_\star$ is fixed, however, both have the uniform marginal distribution over $\mathfrak{S}_K$.

\input{files/09_permutation_residual_analysis.tex}

%% file: files/09_permutation_residual_analysis.tex
\subsubsection{Why Additive Residual-Correction Does Not Directly Extend to Permutations}

\paragraph{Original residual update.}
For a state $\mathbf{x}^{(n)}$, predicted clean state $\widehat{\mathbf{x}}_0^{(n)}$, and fixed corruption trajectory $\mathcal{D}_d$, Residual-correction uses
\begin{equation}
  \mathbf{x}^{(n-1)}
  =\mathbf{x}^{(n)}
   -\mathcal{D}_{d_n}(\widehat{\mathbf{x}}_0^{(n)})
   +\mathcal{D}_{d_{n-1}}(\widehat{\mathbf{x}}_0^{(n)}).
  \label{eq:group-residual-original-additive}
\end{equation}
The update preserves the difference between the current state and corrupted prediction while changing the corruption level.

\paragraph{Permutation-space analogue.}
For $D_d(\pi)=\pi\circ g_d$, replacing subtraction and addition with inversion and composition gives
\begin{align}
  \pi^{(n-1)}
  &=\pi^{(n)}
    \circ\bigl(D_{d_n}(\widehat\pi_0^{(n)})\bigr)^{-1}
    \circ D_{d_{n-1}}(\widehat\pi_0^{(n)})
    \label{eq:group-residual-right-update}\\
  &=\pi^{(n)}\circ(g_{d_n})^{-1}
    \circ\underbrace{(\widehat\pi_0^{(n)})^{-1}
    \circ\widehat\pi_0^{(n)}}_{\mathrm{id}}
    \circ g_{d_{n-1}}\notag\\
  &=\pi^{(n)}\circ(g_{d_n})^{-1}\circ g_{d_{n-1}}.
    \label{eq:group-residual-cancellation}
\end{align}
This update preserves permutation validity, but the prediction cancels within each step regardless of how predictions vary across the trajectory.

\paragraph{Dependence on initialization.}
For $K=d_N>\cdots>d_0=0$ and $g_{d_0}=\mathrm{id}$, successive shuffle
factors cancel across steps, giving
\begin{equation}
  \pi^{(0)}=\pi^{(N)}\circ(g_{d_N})^{-1}
  =\begin{cases}
    \pi_\star, & \pi^{(N)}=\pi_\star\circ g_{d_N},\\
    \mathrm{id}, & \pi^{(N)}=g_{d_N}.
  \end{cases}
  \label{eq:group-residual-general-terminal}
\end{equation}
Thus, the update exactly reconstructs a clean permutation from an input paired with its retained shuffle. Under the generation initialization $\pi^{(N)}=g_{d_N}$, however, it always returns the original presentation order independently of the model. For uniform $g_{d_N}$, the two initial states have identical marginals but different couplings with the retained shuffle unless $\pi_\star=\mathrm{id}$; the marginals alone therefore do not determine the output.

\paragraph{Connection to the TACoS analysis.}
Section~3.3 of Cold Diffusion analyzes Transformation Agnostic Cold Sampling (TACoS) under the approximation $\mathcal{D}_d(\mathbf{x})\approx\mathbf{x}+d\mathbf{e}$, where $\mathbf{e}$ is constant and independent of $\mathbf{x}$~\citep{bansal2023cold}. Treating this approximation as exact yields
\begin{align}
  \mathbf{x}^{(n-1)}
  &=\mathbf{x}^{(n)}
    -(\widehat{\mathbf{x}}_0^{(n)}+d_n\mathbf{e})
    +(\widehat{\mathbf{x}}_0^{(n)}+d_{n-1}\mathbf{e})\notag\\
  &=\mathbf{x}^{(n)}-(d_n-d_{n-1})\mathbf{e}.
  \label{eq:group-residual-tacos-translation}
\end{align}
Prediction errors cancel, so initialization at $\mathbf{x}^{(N)}=\mathbf{x}_\star+d_N\mathbf{e}$ exactly recovers $\mathbf{x}_\star$. This reconstruction property is consistent with the paired-permutation case above. Cancellation alone therefore does not imply failure; the degeneracy arises from the combination of the update and the generation initialization. For the soft corruption $D_t^{\mathbf Z}(\mathbf A)=\mathbf A+t(\mathbf Z-\mathbf A)$, the displacement depends on the clean state, so the residual update retains the prediction through $(t_n-t_{n-1})(\mathbf Q^{(n)}-\mathbf Z)$.

\paragraph{Alternative composition order.}
Applying the correction on the left gives
\begin{equation}
  \pi^{(n-1)}
  =D_{d_{n-1}}(\widehat\pi_0^{(n)})
    \circ\bigl(D_{d_n}(\widehat\pi_0^{(n)})\bigr)^{-1}
    \circ\pi^{(n)}.
  \label{eq:group-residual-left-update}
\end{equation}
This ordering can retain dependence on the prediction. Nevertheless, permutation sign is multiplicative and invariant under inversion, so the prediction sign cancels within each step and intermediate shuffle signs cancel across steps. Under $\pi^{(N)}=g_{d_N}$,
\begin{equation}
  \operatorname{sgn}(\pi^{(0)})
  =\operatorname{sgn}(\pi^{(N)})\operatorname{sgn}(g_{d_N})
  =+1.
  \label{eq:group-residual-parity-final}
\end{equation}
Hence this ordering cannot generate odd permutations. The limitation applies to both HA-dLLM and OS-dLLM under the stated initialization and concerns only these direct analogues, not arbitrary permutation-valued residual samplers.

%% file: files/09_soft_sampling_analysis.tex
\subsection{Analysis of Soft-State Sampling}
\label{sec:analysis-soft-sampling}

We analyze the SA-dLLM Predict--recorrupt and Residual-correction samplers defined in Equations~\eqref{eq:method-soft-predict-recorrupt} and~\eqref{eq:method-soft-residual-correction} and summarized in Algorithms~\ref{alg:sa-predict-recorrupt-sampling} and~\ref{alg:sa-residual-correction-sampling}.

\paragraph{Validity of soft-state sampling.}
Let $\mathcal{B}_K$ be the Birkhoff polytope defined in Equation~\eqref{eq:prelim-birkhoff}. For fixed $\mathbf{Z}\in\mathcal{B}_K$, the affine corruption in Equation~\eqref{eq:method-soft-forward} can be written as
$D_t^{\mathbf{Z}}(\mathbf{A})=(1-t)\mathbf{A}+t\mathbf{Z}$.
The Predict--recorrupt update in Equation~\eqref{eq:method-soft-predict-recorrupt} uses
$D_{t_{n-1}}^{\mathbf{Z}}(\mathbf{Q}^{(n)})$ directly, while
the Residual-correction update in Equation~\eqref{eq:method-soft-residual-correction} uses
\begin{align}
  \mathbf{X}_{t_{n-1}}
  &=\mathbf{X}_{t_n}
    -D_{t_n}^{\mathbf{Z}}(\mathbf{Q}^{(n)})
    +D_{t_{n-1}}^{\mathbf{Z}}(\mathbf{Q}^{(n)})\notag\\
  &=\mathbf{X}_{t_n}
    +(t_n-t_{n-1})(\mathbf{Q}^{(n)}-\mathbf{Z}).
  \label{eq:analysis-soft-residual-derivation}
\end{align}
Reusing $\mathbf{Z}$ throughout the trajectory is essential for the invariant below.

\begin{proposition}[Soft-state validity]
\label{prop:soft-state-validity}
Let $1=t_N>\cdots>t_0=0$ follow the schedule in Equation~\eqref{eq:method-soft-schedule}, initialize
$\mathbf{X}_{t_N}=\mathbf{Z}$, and suppose that
$\mathbf{Z},\mathbf{Q}^{(n)}\in\mathcal{B}_K$ at every evaluation.
Both sampling rules keep every state in $\mathcal{B}_K$. In particular, with
$\Delta_n=t_n-t_{n-1}>0$, the residual trajectory satisfies
\begin{equation}
  \mathbf{X}_{t_n}
  =t_n\mathbf{Z}
   +\sum_{m=n+1}^{N}\Delta_m\mathbf{Q}^{(m)},
  \qquad 0\leq n\leq N,
  \label{eq:analysis-soft-invariant}
\end{equation}
where an empty sum is zero. The terminal assignment is $\mathbf{A}=\mathbf{Q}^{(1)}$ for Predict--recorrupt and $\mathbf{A}=\sum_{n=1}^{N}\Delta_n\mathbf{Q}^{(n)}$ for Residual-correction.
\end{proposition}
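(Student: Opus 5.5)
The proof rests on one structural fact: $\mathcal{B}_K$ is convex, so any convex combination of doubly stochastic matrices is doubly stochastic. To check this, take $\mathbf{X}=\sum_k \lambda_k \mathbf{Y}_k$ with $\lambda_k\geq0$, $\sum_k\lambda_k=1$ and each $\mathbf{Y}_k\in\mathcal{B}_K$. Then $\mathbf{X}$ is entrywise nonnegative, and by linearity
\begin{equation*}
  \mathbf{X}\mathbf{1}=\sum_k\lambda_k\mathbf{1}=\mathbf{1},
  \qquad
  \mathbf{X}^\top\mathbf{1}=\mathbf{1}.
\end{equation*}
With this fact the argument splits into two cases, one per sampler.

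\textbf{Predict--recorrupt.} The initial state is $\mathbf{X}_{t_N}=\mathbf{Z}\in\mathcal{B}_K$. For $n\leq N$, Equation~\eqref{eq:method-soft-predict-recorrupt} writes $\mathbf{X}_{t_{n-1}}=(1-t_{n-1})\mathbf{Q}^{(n)}+t_{n-1}\mathbf{Z}$. Since $t_{n-1}\in[0,1]$, this is a convex combination of two members of $\mathcal{B}_K$, so it lies in $\mathcal{B}_K$. No induction on earlier states is needed, because the update never uses $\mathbf{X}_{t_n}$ itself. The terminal assignment is $\mathbf{A}=\mathbf{Q}^{(1)}$ by definition, and it lies in $\mathcal{B}_K$ by hypothesis.

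\textbf{Residual-correction.} I prove the invariant~\eqref{eq:analysis-soft-invariant} by downward induction on $n$.
\begin{itemize}
  \item Base case $n=N$: the sum is empty and $t_N=1$, so the right-hand side is $\mathbf{Z}=\mathbf{X}_{t_N}$.
  \item Inductive step: assume the invariant at $n$. Substituting it into the simplified update~\eqref{eq:analysis-soft-residual-derivation} gives
  \begin{equation*}
    \mathbf{X}_{t_{n-1}}
    =t_n\mathbf{Z}+\sum_{m=n+1}^{N}\Delta_m\mathbf{Q}^{(m)}+\Delta_n\mathbf{Q}^{(n)}-\Delta_n\mathbf{Z}.
  \end{equation*}
  Since $t_n-\Delta_n=t_{n-1}$, this equals $t_{n-1}\mathbf{Z}+\sum_{m=n}^{N}\Delta_m\mathbf{Q}^{(m)}$, which is the invariant at $n-1$.
\end{itemize}
The same formula also matches the accumulator of Equation~\eqref{eq:method-soft-accumulator}, which confirms that the two implementations agree.

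It remains to show the residual states are convex combinations. The coefficients in the invariant are $t_n\geq0$ and $\Delta_m>0$. Their sum telescopes:
\begin{equation*}
  t_n+\sum_{m=n+1}^{N}(t_m-t_{m-1})=t_n+t_N-t_n=t_N=1.
\end{equation*}
Hence each $\mathbf{X}_{t_n}$ is a convex combination of elements of $\mathcal{B}_K$ and so belongs to $\mathcal{B}_K$ by the convexity fact. Setting $n=0$ gives $t_0=0$, which yields $\mathbf{A}=\mathbf{X}_{t_0}=\sum_{n=1}^{N}\Delta_n\mathbf{Q}^{(n)}$. This is also doubly stochastic, since its weights $\Delta_n$ sum to $t_N-t_0=1$.

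The only real bookkeeping is the index shift in the inductive step. The sum must move from $m\geq n+1$ to $m\geq n$, and the $\mathbf{Z}$ coefficient must drop from $t_n$ to $t_{n-1}$. The argument uses nothing about the linear spacing $t_n=n/N$ beyond monotonicity and the endpoints $t_N=1$, $t_0=0$. Finally, as the text notes, the result is exact only under the hypothesis that $\mathbf{Z}$ and every $\mathbf{Q}^{(n)}$ are exactly doubly stochastic; with finite Sinkhorn iterations it holds only approximately.
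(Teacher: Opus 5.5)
Your proof is correct and follows essentially the same route as the paper: convexity of $\mathcal{B}_K$ handles Predict--recorrupt, and for Residual-correction you use downward induction on the invariant via $t_n-\Delta_n=t_{n-1}$, the telescoping coefficient sum $t_n+(t_N-t_n)=1$, and then substitute $t_0=0$ for the terminal assignment. Your explicit check that $\mathcal{B}_K$ is convex and your remark that only monotonicity and the schedule endpoints are used are sound additions that do not change the argument.
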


\begin{proof}
The Predict--recorrupt update is a convex combination of $\mathbf{Q}^{(n)}$ and $\mathbf{Z}$, so feasibility follows from the convexity of $\mathcal{B}_K$. For Residual-correction, Equation~\eqref{eq:analysis-soft-invariant} holds at $n=N$ because $t_N=1$ and the sum is empty. If it holds at $n$, then
\begin{align}
  \mathbf{X}_{t_{n-1}}
  &=t_n\mathbf{Z}
    +\sum_{m=n+1}^{N}\Delta_m\mathbf{Q}^{(m)}
    +\Delta_n(\mathbf{Q}^{(n)}-\mathbf{Z})\notag\\
  &=t_{n-1}\mathbf{Z}
    +\sum_{m=n}^{N}\Delta_m\mathbf{Q}^{(m)},
\end{align}
which proves the invariant by induction. Its coefficients are nonnegative
and satisfy
\begin{equation}
  t_n+\sum_{m=n+1}^{N}\Delta_m
  =t_n+(t_N-t_n)=1.
\end{equation}
Every residual state is therefore a convex combination of doubly stochastic
matrices. Substituting $t_0=0$ gives the stated terminal assignment for Residual-correction.
\end{proof}

With finite Sinkhorn iterations, the statement holds up to normalization error. If every noise and prediction matrix is nonnegative and has row- and column-sum errors bounded by $\delta$ in maximum absolute value, then every generated state satisfies the same bound in exact arithmetic; convex combinations do not amplify these errors. The terminal Hungarian operation in Equation~\eqref{eq:method-soft-final-projection} returns a discrete permutation. Although exact Sinkhorn predictions are strictly positive, Equation~\eqref{eq:method-soft-final-scores} applies a positive floor to the terminal assignment to prevent finite-precision underflow before taking logarithms.

\paragraph{Consistency with clean-state prediction.}
If all predictions along either trajectory equal the same assignment
$\mathbf{A}_\star\in\mathcal{B}_K$, then both samplers reproduce its affine
corruption path exactly:
\begin{equation}
  \mathbf{X}_{t_n}
  =(1-t_n)\mathbf{A}_\star+t_n\mathbf{Z},
  \qquad
  \mathbf{A}=\mathbf{A}_\star.
  \label{eq:analysis-soft-consistency}
\end{equation}
For Predict--recorrupt, the result follows directly from the update. For Residual-correction, it follows from Equation~\eqref{eq:analysis-soft-invariant} and $\sum_{m=n+1}^{N}\Delta_m=1-t_n$. In the ideal case $\mathbf{A}_\star=\mathbf{P}_\star\in\mathcal{P}_K$, both samplers recover the clean permutation matrix before terminal decoding. Exact permutation-valued soft predictions are an idealized limit of finite-temperature Sinkhorn outputs.

\paragraph{Scope of the sampling guarantee.}
These results establish feasible trajectories and consistency with the model's clean-state predictions. To formalize generation, let $\omega$ collect the sampled initialization and any other randomness reused by the sampler, and let $F_{\theta,N}(c,\omega)\in\mathfrak{S}_K$ denote the final decoded output. The induced distribution is
\begin{equation}
  p_{\theta,N}^{\mathrm{alg}}(\pi\mid c)
  =\Pr_{\omega}\!\left[F_{\theta,N}(c,\omega)=\pi\right].
  \label{eq:analysis-model-induced-distribution}
\end{equation}
With fixed tie-breaking, random initialization followed by deterministic updates defines a permutation-valued generative procedure. The preceding arguments do not establish equality with the data distribution or exact sampling from reverse conditionals. The same noise matrix $\mathbf{Z}$ influences every intermediate state and subsequent prediction; consequently, each re-corrupted state is coupled to the preceding trajectory rather than being an independent sample from the forward corruption distribution conditioned on the latest prediction.